\documentclass{article}
\usepackage{microtype}
\usepackage{graphicx}
\usepackage{subcaption}
\usepackage{booktabs} 
\usepackage{enumitem}

\usepackage{hyperref}

\usepackage[preprint]{icml2026}

\usepackage{multirow} 
\usepackage{booktabs}
\usepackage{makecell}
\usepackage{array}
\usepackage{amsmath}
\usepackage{amssymb}
\usepackage{mathtools}
\usepackage{amsthm}
\usepackage{xcolor} 
\usepackage{float}
\theoremstyle{plain}
\newtheorem{theorem}{Theorem}[section]
\newtheorem{proposition}[theorem]{Proposition}

\theoremstyle{definition}
\newtheorem{definition}[theorem]{Definition}

\theoremstyle{remark}

\usepackage[textsize=tiny]{todonotes}

\begin{document}

\twocolumn[
  \icmltitle{FOCAL-Attention for Heterogeneous Multi-Label Prediction}

  \icmlsetsymbol{equal}{*}

  \begin{icmlauthorlist}
    \icmlauthor{Chenghao Zhang}{equal,aff1,aff2}
    \icmlauthor{Qingqing Long}{equal,aff1,aff2}
    \icmlauthor{Ludi Wang}{aff1,aff2}
    \icmlauthor{Wenjuan Cui}{aff1,aff2}
    \icmlauthor{Jianjun Yu}{aff1,aff2}
    \icmlauthor{Yi Du}{aff1,aff2}
  \end{icmlauthorlist}

  \icmlaffiliation{aff1}{Computer Network Information Center, Chinese Academy of Sciences, Beijing, China}
  \icmlaffiliation{aff2}{University of Chinese Academy of Sciences, Beijing, China}
  \icmlcorrespondingauthor{Jianjun Yu, Yi Du}{yujj@cnic.ac.cn, duyi@cnic.cn}

  \vskip 0.3in
]

\printAffiliationsAndNotice{}  
\begin{abstract}
Heterogeneous graphs have attracted increasing attention for modeling multi-typed entities and relations in complex real-world systems.
Multi-label node classification on heterogeneous graphs is challenging due to structural heterogeneity and the need to learn shared representations across multiple labels.
Existing methods typically adopt either flexible attention mechanisms or meta-path constrained anchoring, but in heterogeneous multi-label prediction they often suffer from \textbf{semantic dilution} or \textbf{coverage constraint}. Both issues are further amplified under multi-label supervision.
We present a theoretical analysis showing that as heterogeneous neighborhoods expand, the attention mass allocated to task-critical (primary) neighborhoods diminishes, and that meta-path constrained aggregation exhibits a dilemma: too few meta-paths intensify coverage constraint, while too many re-introduce dilution.
To resolve this coverage-anchoring conflict, we propose FOCAL: Fusion Of Coverage and Anchoring Learning, with two components: coverage-oriented attention (COA) for flexible, unconstrained heterogeneous context aggregation, and anchoring-oriented attention (AOA) that restricts aggregation to meta-path-induced primary semantics.
Our theoretical analysis and experimental results further indicates that FOCAL has a better performance than other state-of-the-art methods.

\end{abstract}

\section{Introduction}
Heterogeneous graphs naturally model real-world systems with multiple node and relation types~\cite{hu2020open}, such as user-item-attribute networks in recommendation~\cite{liao2024revgnn}, author-paper-venue networks in academic mining, and entity-relation graphs in biomedical discovery~\cite{ali2025graph}. In many of these applications, a node may simultaneously belong to multiple categories (e.g., a paper spanning multiple topics, a product associated with multiple attributes), making multi-label node classification on heterogeneous graphs a practically important yet challenging problem. While heterogeneous graph neural networks (HGNNs) have achieved strong performance in a variety of downstream tasks~\cite{ju2024comprehensive,yuehyperbolic}, most existing studies primarily focus on single-label settings or do not fully address the unique difficulties arising from heterogeneity coupled with multi-label supervision.

Previous works can be broadly grouped into two lines: \textbf{(1)} The first line develops HGNNs mainly for heterogeneous single-label prediction, leveraging mechanisms such as meta-path based aggregation or attention over typed neighbors~\cite{ji2021heterogeneous1,yang2021interpretable}. Under multi-label supervision, this coverage-anchoring tension is further amplified, making semantic dilution and insufficient coverage more pronounced, as detailed analyzed in Theorem~\ref{lem:multilabel_loss_grad_amp}, \ref{lem:multilabel_amplification_main}.
\textbf{(2)} The second line focuses on multi-label learning on homogeneous graphs~\cite{zhou2021multi,xiao2022semantic}, where type-aware semantics and heterogeneous structural patterns are absent. Directly applying them to heterogeneous graphs often ignores relation specific semantics, which are essential for accurate prediction. 
Only few works attempt to jointly consider heterogeneity and multi-label supervision \cite{gupta2025persona,bei2025correlation}. However, \textbf{existing attempts either simplify multi-label classification into node-label link prediction, or lack an explicit, principled mechanism that separates and coordinates coverage and anchoring roles}.

In real-world applications~\cite{yuehyperbolic,wang2019knowledge}, task-critical semantics often concentrate on a subset of node/edge types or relation patterns, which we refer to as \textbf{primary semantics}~\cite{wang2019kgat,wang2019knowledge}. 
Other types mainly contribute \textbf{secondary semantics}, providing auxiliary context that is usually less predictive for the target labels. 
However, real heterogeneous graphs are typically dominated by abundant secondary types in local neighborhoods, and the presence of high-degree nodes further inflates neighborhood size. As a result, \textbf{(1) the signal from a few primary semantic can be easily overwhelmed by massive contextual information, leading to semantic dilution}-a phenomenon we theoretically characterize under flexible attention based aggregation in Theorem~\ref{lem:softmax_attention_dilution}. Meanwhile, despite being less critical on average, secondary types may still contain rare but decisive nodes in some cases; capturing such rare yet informative context requires a more flexible attention mechanism that can aggregate across types and also distinguish different nodes within the same type. 
These observations reveal an inherent tension: flexible aggregation is needed for broad semantic coverage, yet anchoring primary semantics is necessary to avoid being dominated by noisy or overwhelming context.
Also, when anchoring is enforced via meta-path constrained aggregation, one can face a \textbf{dilemma between (2) coverage constraint} (\textbf{under-attention when too few meta-paths are used)} \textbf{and (3) semantic dilution} \textbf{(over-dispersion when too many meta-paths are included)}, as formalized in Theorem~\ref{lem:meta_path_subspace_dilution_main}. Therefore, a single aggregation mechanism often struggles to achieve both semantic coverage and semantic anchoring in heterogeneous multi-label classification.

To address these challenges, we propose FOCAL, a Role-Separated Attention framework for multi-label node classification on heterogeneous graphs. FOCAL decouples neighborhood aggregation into two complementary roles: a Coverage-Oriented Attention (COA) module that performs flexible, type-aware attention to broadly collect heterogeneous context and highlight rare but informative nodes even from secondary types; and an Anchoring-Oriented Attention (AOA) module that anchors primary semantics via meta-path constrained aggregation over primary neighborhoods, reducing the risk of drifting away from task-critical signals. We further design a role-aware fusion mechanism to coordinate the two views, and adopt training strategies tailored for multi-label and imbalanced supervision, together with a consistency objective to promote complementary yet coherent representations. Our main contributions are:
\begin{itemize}
\item We identify and theoretically characterize two key failure modes on HGN, semantic dilution and coverage constraint, and reveal the meta-path-based anchoring dilemma. We further show that these failure modes are amplified under multi-label supervision.
\item We propose FOCAL, a role-separated attention framework that explicitly decouples semantic coverage (COA) and semantic anchoring (AOA) and integrates them via role-aware fusion for heterogeneous multi-label prediction. We theoretically justify that FOCAL achieves a better coverage-anchoring trade-off than attention-only and meta-path-only alternatives.
\item Extensive experiments on several heterogeneous multi-label benchmarks show that FOCAL significantly outperforms other baselines.
\end{itemize}

\section{Preliminaries}\label{Preliminaries}

\subsection{Problem Definition}
A heterogeneous graph is defined as
\(
\mathcal{G} = (\mathcal{V}, \mathcal{E}, \phi, \psi),
\)
where $\mathcal{V}$ and $\mathcal{E}$ denote the sets of nodes and edges, respectively.
The mapping $\phi: \mathcal{V} \rightarrow \mathcal{A}$ assigns each node to a node type, and
$\psi: \mathcal{E} \rightarrow \mathcal{R}$ assigns each edge to a relation type.
Each node $v_i \in \mathcal{V}$ is associated with a feature vector
$\mathbf{x}_i \in \mathbb{R}^{d_{\phi(v_i)}}$.
Let $\mathcal{V}_l \subset \mathcal{V}$ denote the labeled nodes and
$\mathcal{V}_u = \mathcal{V} \setminus \mathcal{V}_l$ the unlabeled ones.

In the multi-label setting, each labeled node $v_i \in \mathcal{V}_l$ is associated with a subset of labels from a label set $\mathcal{L}$ with $|\mathcal{L}| = C$.
The ground-truth labels are represented by a multi-hot vector
$\mathbf{y}_i \in \{0,1\}^C$.
The goal of heterogeneous multi-label node classification is to predict the labels of unlabeled nodes based on the observed graph structure and node attributes.

\subsection{Generic Heterogeneous GNN Encoder}\label{Generic Heterogeneous GNN Encoder}

We consider a generic heterogeneous graph neural network encoder that maps each node $v_i \in \mathcal{V}$ to a $d$-dimensional representation $h_i \in \mathbb{R}^d$, which is computed by a parameterized function $f_{\theta}$ as: $h_i = f_{\theta}(v_i, \mathcal{G})$, where $\theta$ denotes the learnable parameters.

We focus on attention-based encoders, which cover most existing heterogeneous GNNs. After one aggregation step, the representation of node $v_i$ can be expressed as $h_i = \sum{v_j \in \mathcal{N}(v_i)} \alpha_{j,i} , m_{j \rightarrow i}$, where $\mathcal{N}(v_i)$ denotes the aggregation neighborhood, $\alpha_{j,i}$ are normalized attention weights satisfying $\sum_{v_j \in \mathcal{N}(v_i)} \alpha_{j,i} = 1$, and $m_{j \rightarrow i}$ represent the messages passed from node $v_j$ to node $v_i$. Different heterogeneous GNNs correspond to different choices of aggregation neighborhoods and attention parameterizations.

For analysis, we conceptually distinguish aggregation sources in $\mathcal{N}(v_i)$ according to their relevance to the downstream prediction task.
We refer to a subset of neighbors as \emph{task-critical} (primary) if the information they convey contributes substantially to the prediction of at least one target label, while the remaining neighbors are considered \emph{task-non-critical} (secondary).

\subsection{Representative Attention Paradigms}
Existing attention-based heterogeneous GNNs differ in aggregation strategies, we abstract them into two paradigms: \textbf{Relation-adaptive attention} aggregates information from a heterogeneous neighborhood with learnable attention weights conditioned on node and edge types. \textbf{Meta-path-based attention} restricts aggregation to a predefined set of meta-paths, computing representations along each meta-path and combining them through semantic-level attention.

The two paradigms impose different forms of constraints on attention-based aggregation: relation-adaptive attention relies on parameterized weighting over unrestricted neighborhoods, whereas meta-path-based attention enforces explicit structural restrictions.

\begin{figure*}[htbp]
    \centering
    \includegraphics[width=0.95\linewidth]{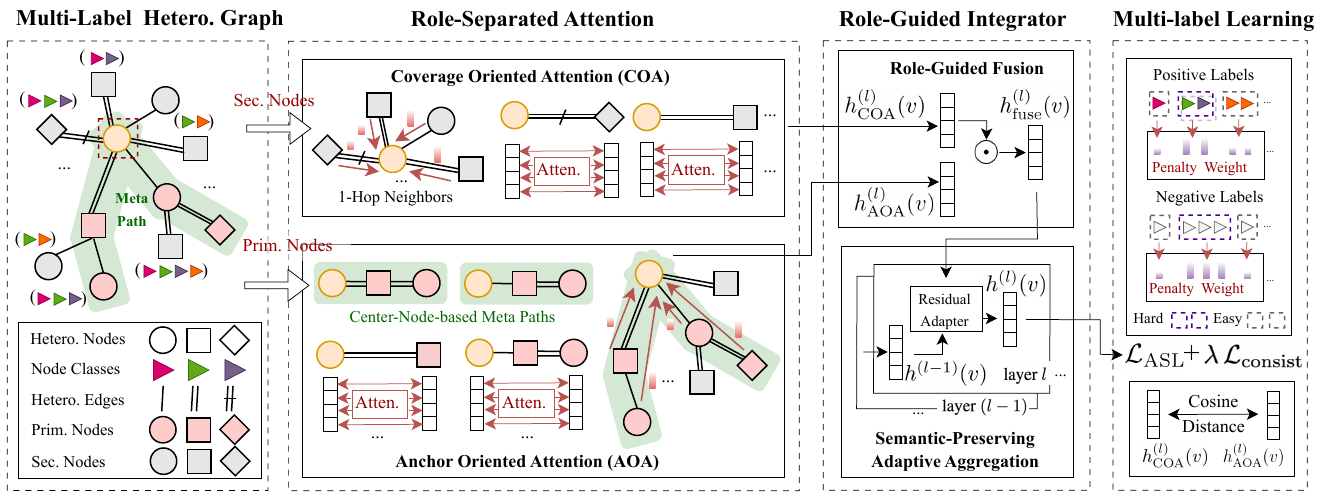}
    \caption{Model overview of FOCAL. We first designs a \textbf{role-separated} attention. 
    In each layer, a coverage-oriented attention (\textbf{COA}) component captures broad heterogeneous contextual semantics from all nodes, 
    while an anchoring-oriented attention (\textbf{AOA}) component models deep primary semantics. Then the \textbf{role-guided integrator} comprises \textbf{role-guided fusion} and \textbf{semantic-preserving adaptive aggregation} to preserve both semantics and prevent primary semantics from being overwritten.
    Finally, an asymmetric loss $\mathcal{L}_{\mathrm{ASL}}$ is adopted for multi-label supervision, along with a consistency loss $\mathcal{L}_{\mathrm{consist}}$ for regularization.}
    \label{fig:rosa_architecture}
    \vspace{-2mm}
\end{figure*}

\section{Theoretical Challenges}\label{sec:theoretical_analysis}
We study two dominant aggregation paradigms for heterogeneous graphs: \textbf{(i) relation-adaptive attention}, and \textbf{(ii) meta-path-based attention}.
For a target node, as introduced in Section~\ref{Generic Heterogeneous GNN Encoder}, we partition its neighbors into a task-critical \emph{\textbf{primary}} set and a task-non-critical \emph{\textbf{secondary}} set, and analyze how the total attention on primary neighbors behaves as heterogeneous neighborhoods expand.

\subsection{Attention Dilution in Relation-adaptive Attention} \label{sec:analysis_relation_adaptive_attention}
We analyze relation-adaptive attention that applies a single softmax over a heterogeneous neighborhood.
For a target node \(t\) with neighbors \(\mathcal{S}(t)\), the attention weights are defined by
\begin{equation}
\label{eq:attn_softmax_def}
\alpha_{t,s}
=
\frac{\exp(E_{t,s})}{\sum_{u\in \mathcal{S}(t)} \exp(E_{t,u})},
\qquad s\in \mathcal{S}(t),
\end{equation}
where \(E_{t,s}\) denotes the attention logit.
We partition neighbors into a task-relevant \emph{primary} set \(\mathcal{S}^*(t)\) and a task-irrelevant \emph{secondary} set \(\mathcal{S}^c(t)\),
with \(n^*:=|\mathcal{S}^*(t)|\) and \(m:=|\mathcal{S}^c(t)|\).
We further define the \emph{total primary attention mass}
\begin{equation}
\label{eq:primary_mass_def1}
A^*(t)
:=
\sum_{s\in \mathcal{S}^*(t)} \alpha_{t,s}.
\end{equation}

\begin{theorem}[Attention dilution of heterogeneous graph]
\label{lem:softmax_attention_dilution}
Assume the within-group empirical averages of \(\exp(E_{t,s})\) converge almost surely to constants
\(\mu^*>0\) (primary) and \(\mu>0\) (secondary), with no \(m\)-dependent exponential advantage.
Then as \(m\to\infty\),
\begin{equation}
\label{eq:attn_dilution_limit_main}
A^*(t)\xrightarrow[]{\mathrm{a.s.}}
\frac{n^*\mu^*}{n^*\mu^*+m\mu}.
\end{equation}
In particular, when \(n^*=O(1)\), if \(\frac{m}{n^*}\to\infty\) then \(A^*(t)\to 0\) and \(A^*(t)=O(1/m)\).
\end{theorem}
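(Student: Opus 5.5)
The plan is to rewrite \(A^*(t)\) as a ratio of two group sums and apply the assumed almost-sure convergence to each sum separately. Write
\[
Z^*:=\sum_{s\in\mathcal{S}^*(t)}\exp(E_{t,s}),\qquad Z^c:=\sum_{u\in\mathcal{S}^c(t)}\exp(E_{t,u}).
\]
By the definition of the softmax weights and of \(A^*(t)\), we have \(A^*(t)=Z^*/(Z^*+Z^c)\).

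Introduce the empirical averages \(\bar\mu^*_{n^*}:=Z^*/n^*\) and \(\bar\mu_m:=Z^c/m\). Dividing numerator and denominator by \(n^*\) gives
\[
A^*(t)=\frac{n^*\bar\mu^*_{n^*}}{n^*\bar\mu^*_{n^*}+m\bar\mu_m}.
\]

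The core step is to compare this with the target quantity \(B_m:=n^*\mu^*/(n^*\mu^*+m\mu)\). Both \(A^*(t)\) and \(B_m\) have the form \(g(x,y)=x/(x+y)\), evaluated at \((n^*\bar\mu^*,m\bar\mu)\) and \((n^*\mu^*,m\mu)\) respectively. The function \(g\) is scale-invariant, so it suffices to control the ratios \(\bar\mu^*/\mu^*\to1\) and \(\bar\mu_m/\mu\to1\) almost surely. The second ratio is exactly the hypothesis.

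The key estimate follows from writing \(x'=ax\) and \(y'=by\) with \(a,b>0\):
\[
|g(x',y')-g(x,y)|=\frac{|a-b|\,xy}{(ax+by)(x+y)}\le \frac{|a-b|}{\min(a,b)}\,g(x,y)\bigl(1-g(x,y)\bigr).
\]
On the almost-sure event where \(a,b\to1\), this gives \(|A^*(t)-B_m|\le o(1)\cdot B_m(1-B_m)\). Two conclusions follow. The difference tends to zero, which is the stated a.s. limit. Moreover, the ratio \(A^*(t)/B_m\to1\), which makes the asymptotic-equivalence reading meaningful even when \(B_m\to0\).

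The ``no \(m\)-dependent exponential advantage'' clause is used precisely here. It guarantees that \(\mu^*\) and \(\mu\) are fixed constants, so that neither group's logits scale with \(m\); this is what makes \(a,b\to1\) the relevant regime. If \(n^*\) stays bounded rather than growing, I would read the primary hypothesis as saying that \(\bar\mu^*=\mu^*\) is eventually exact or converges along the sequence. In that case \(a\to1\) trivially, and the argument is unchanged.

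For the corollary with \(n^*=O(1)\) and \(m/n^*\to\infty\), write
\[
B_m=\frac{1}{1+(m/n^*)(\mu/\mu^*)}\to0 .
\]
Combined with \(A^*(t)/B_m\to1\), this gives \(A^*(t)\le 2B_m\le 2n^*\mu^*/(m\mu)=O(1/m)\) almost surely for large \(m\).

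The main obstacle is interpretive rather than computational. Convergence of the difference \(A^*-B_m\) to zero is trivial when \(B_m\to0\), and it does not by itself yield the \(O(1/m)\) rate. That is why I would prove the multiplicative (ratio) form of the estimate above rather than only the additive one, and state explicitly that the hypothesis is being used in that multiplicative sense.
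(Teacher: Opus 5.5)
Your proof is correct, and it takes a different route from the paper's. The paper splits into two cases. When \(n^*\) grows with \(X_{n^*}/n^*\to\mu^*\), it divides numerator and denominator by \(m\) and appeals to the continuous mapping theorem to get \(A^*(t)=\frac{n^*\mu^*}{n^*\mu^*+m\mu}+o_{\mathrm{a.s.}}(1)\), then bounds the leading term by \(n^*\mu^*/(m\mu)\). When \(n^*\) is fixed, it assumes only \(X_{n^*}<\infty\) a.s.\ and uses the one-sided bound \(A^*(t)\le X_{n^*}/Y_m\le 2X_{n^*}/(\mu m)\) once \(Y_m\ge \mu m/2\).

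You replace both cases with a single multiplicative estimate from the exact identity for \(g(ax,by)-g(x,y)\), which gives \(A^*(t)/B_m\to1\) in every regime. This buys two things. First, the identity supplies the uniformity in \(n^*/m\) that the paper's continuous-mapping step leaves implicit, since \(n^*/m\) need not converge. Second, it actually delivers the rate when \(n^*\) grows. As you point out, an additive \(o(1)\) remainder says nothing about \(O(n^*/m)\), and the paper's case (a) only bounds the leading term.

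In the other direction, the paper's fixed-\(n^*\) case is more economical. It never asks the primary average to converge to a constant, which for fixed \(n^*\) is the degenerate hypothesis you had to reread as ``eventually exact.'' You do not need that rereading. Either set \(a\equiv1\) by letting \(\bar\mu^*\) itself (random, positive, finite) play the role of \(\mu^*\), or note directly that \(A^*(t)\le Z^*/Z^c\le 2Z^*/(\mu m)\) for large \(m\).

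One small correction to your commentary: the ``no \(m\)-dependent exponential advantage'' clause is not what makes \(a,b\to1\). That follows from the convergence hypotheses with positive limits. In the paper the clause only keeps the constant \(\mu^*/\mu\) in the \(O(n^*/m)\) rate from growing with \(m\).
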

See Appendix~\ref{sec:appendix_attn_dilution} for full statements and proofs.

Let the attention aggregator be
\(
\label{eq:attn_agg_def}
h_t=\sum_{s\in\mathcal{S}(t)} \alpha_{t,s} m_{s\to t},
\)
and denote the primary component by \(h_t^{(p)}:=\sum_{s\in\mathcal{S}^*(t)} \alpha_{t,s} m_{s\to t}\).
Consider a multi-label objective with positive label set \(\mathcal{P}(t)\) of size \(L:=|\mathcal{P}(t)|\),
logits \(z_k=w_k^\top h_t\), and sigmoid \(\sigma(\cdot)\).

\begin{theorem}[Multi-label amplification of attention dilution]
\label{lem:multilabel_loss_grad_amp}
Assume \(\|m_{s\to t}\|\le M\) for all \(s\in\mathcal{S}^*(t)\), and for each \(k\in\mathcal{P}(t)\) the logit satisfies
\(z_k \le a\|h_t^{(p)}\|+b\) for some constants \(a>0\), \(b\in\mathbb{R}\).
Then the positive-label BCE loss obeys
\begin{equation}
\label{eq:multilabel_amp_main_loss}
\sum_{k\in\mathcal{P}(t)} -\log \sigma(z_k)
\;\ge\;
L\cdot\Big(-\log \sigma\big(aM\,A^*(t)+b\big)\Big).
\end{equation}
Moreover, if \(\|\nabla_{m_{s\to t}} z_k\|\le C\,\alpha_{t,s}\) holds for all \(k\in\mathcal{P}(t)\) and \(s\in\mathcal{S}^*(t)\),
then the gradient through primary messages is bounded by
\begin{equation}
\label{eq:multilabel_amp_main_grad}
\Big\|\nabla_{\{m_{s\to t}:s\in\mathcal{S}^*(t)\}}
\sum_{k\in\mathcal{P}(t)} -\log \sigma(z_k)\Big\|
=
O\big(L\cdot A^*(t)\big).
\end{equation}
\end{theorem}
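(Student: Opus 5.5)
The argument has two independent parts, a loss lower bound and a gradient upper bound. Both rest on one elementary estimate for the primary component. By the triangle inequality and the assumption \(\|m_{s\to t}\|\le M\) on \(\mathcal{S}^*(t)\),
\[
\|h_t^{(p)}\| \le \sum_{s\in\mathcal{S}^*(t)} \alpha_{t,s}\|m_{s\to t}\| \le M\sum_{s\in\mathcal{S}^*(t)}\alpha_{t,s} = M\,A^*(t),
\]
using the definition of \(A^*(t)\) in \eqref{eq:primary_mass_def1} and \(\alpha_{t,s}\ge 0\). Combining this with the logit hypothesis gives, for every \(k\in\mathcal{P}(t)\), \(z_k\le aM A^*(t)+b\), since \(a>0\).

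For the loss bound \eqref{eq:multilabel_amp_main_loss}, I use that \(x\mapsto -\log\sigma(x)=\log(1+e^{-x})\) is monotonically decreasing. Hence \(z_k\le aMA^*(t)+b\) implies \(-\log\sigma(z_k)\ge -\log\sigma(aMA^*(t)+b)\) for each positive label. Summing over the \(L\) labels in \(\mathcal{P}(t)\) yields the factor \(L\). The only subtlety is to note that the upper bound on \(z_k\) is uniform in \(k\), so no label-specific constants appear.

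For the gradient bound \eqref{eq:multilabel_amp_main_grad}, I apply the chain rule:
\[
\nabla_{m_{s\to t}}\big(-\log\sigma(z_k)\big) = -(1-\sigma(z_k))\,\nabla_{m_{s\to t}} z_k .
\]
Since \(0\le 1-\sigma(z_k)\le 1\), each term has norm at most \(C\alpha_{t,s}\). The gradient with respect to the collection \(\{m_{s\to t}: s\in\mathcal{S}^*(t)\}\) is a stacked vector. I bound its Euclidean norm by the sum of the block norms, that is \(\|(v_s)_s\|\le\sum_s\|v_s\|\). Then the triangle inequality over \(k\) gives
\[
\Big\|\nabla \sum_{k}-\log\sigma(z_k)\Big\| \le \sum_{k\in\mathcal{P}(t)}\sum_{s\in\mathcal{S}^*(t)} C\alpha_{t,s} = C\,L\,A^*(t),
\]
which is \(O(L\cdot A^*(t))\).

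The main obstacle is interpretive rather than technical: the statement is an \(O(\cdot)\) bound, and it must be clear what is held fixed. I treat \(C\) as independent of \(m\) and \(L\), so that the constant is explicit, namely \(C\). Combining with Theorem~\ref{lem:softmax_attention_dilution}, the consequence is that as \(m\to\infty\) with \(n^*=O(1)\), the primary-message gradient vanishes like \(O(L/m)\) while the loss lower bound tends to \(L\cdot(-\log\sigma(b))\). This is the amplification claimed: both effects scale linearly with \(L\).
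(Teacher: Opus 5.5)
Your proposal is correct and follows the paper's proof essentially step for step: the bound $\|h_t^{(p)}\|\le M A^*(t)$, then monotonicity of $-\log\sigma$ summed over the $L$ positives, then the chain rule with $|\sigma(z_k)-y_k|\le 1$ summed over $k\in\mathcal{P}(t)$ and $s\in\mathcal{S}^*(t)$ to obtain $C\,L\,A^*(t)$. You also spell out the stacked-norm step $\|(v_s)_s\|\le\sum_s\|v_s\|$, which the paper leaves implicit, and you correctly do without the extra hypothesis $b\le aM$ that the appendix version adds.
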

See Appendix~\ref{app:multilabel_amp_full} for full statements and proofs.

These results indicate a fundamental limitation of relation-adaptive attention.
A single softmax over an expanding heterogeneous neighborhood inevitably dilutes attention: as the number of secondary neighbors grows, the total primary attention mass \(A^*(t)\) decays (typically \(A^*(t)=O(n^*/m)\) and \(A^*(t)\to 0\) when \(m/n^*\to\infty\); Theorem~\ref{lem:softmax_attention_dilution}). In multi-label learning, this dilution is amplified: both the positive-label loss lower bound scales linearly with the number of positives \(L\) (Theorem~\ref{lem:multilabel_loss_grad_amp}) and, more critically, the gradient signal along primary messages is suppressed as \(O(LA^*(t))\) (Proposition~\ref{prop:gradient_attenuation}).

\subsection{Coverage-Dilution Dilemma in Meta-path-based Attention}
\label{subsec:meta_path_attention_analysis}

Given a predefined meta-path set \(\mathcal{M}\) and meta-path \(\mathcal{P} \in \mathcal{M}\), the representation of node \(v\) is computed as
\(
\label{eq:han_representation}
h_v
=
\sum_{\mathcal{P}\in\mathcal{M}}
\beta_{\mathcal{P}}(v)\, z_v^{\mathcal{P}},
\)
where \(\sum_{\mathcal{P}\in\mathcal{M}}\beta_{\mathcal{P}}(v)=1\), \(\beta_{\mathcal{P}}(v)\ge 0\), and \(z_v^{\mathcal{P}}\) is the meta-path-specific aggregation along \(\mathcal{P}\).
All node representations therefore lie in the meta-path-induced subspace
\(
\label{eq:meta_path_span_setup}
\mathcal{H}_{\mathcal{M}}
:=
\mathrm{span}\left(\{ z_v^{\mathcal{P}} \mid \mathcal{P}\in\mathcal{M} \}\right).
\)
Let \(\mathcal{M}^*\subset\mathcal{M}\) denote the primary meta-path subset and define the total primary semantic attention mass
\begin{equation}
\label{eq:primary_mass_def2}
B^*(v)
:=
\sum_{\mathcal{P}\in\mathcal{M}^*}\beta_{\mathcal{P}}(v).
\end{equation}

\begin{theorem}[The coverage-dilution dilemma of meta-path-based attention]
\label{lem:meta_path_subspace_dilution_main}
Meta-path attention with a discrete set \(\mathcal{M}\) is constrained in two coupled ways:
\textbf{(i) Coverage constraint.}
For all nodes \(v\), \(h_v\in \mathcal{H}_{\mathcal{M}}\). Hence, information not captured through \(\{z_v^{\mathcal{P}}\}_{\mathcal{P}\in\mathcal{M}}\) cannot be expressed by \(h_v\).
\textbf{(ii) Semantic dilution.}
If \(\beta_{\mathcal{P}}(v)\) is produced by a single semantic-level softmax over \(\mathcal{M}\), then under mild non-degeneracy conditions on semantic scores (Appendix~\ref{app:meta_path_tradeoff_full}), \( B^*(v) = O\left(|\mathcal{M}^*| / |\mathcal{M}|\right) \).
In particular, when \(|\mathcal{M}|\) increases while \(|\mathcal{M}^*|\) is fixed, the total attention allocated to primary semantics decreases.
\end{theorem}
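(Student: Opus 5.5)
The theorem splits into two independent claims, and I would prove them separately.

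Part (i) is purely structural and needs no limiting argument. By construction \(h_v=\sum_{\mathcal{P}\in\mathcal{M}}\beta_{\mathcal{P}}(v)\,z_v^{\mathcal{P}}\) is a convex combination, and hence a linear combination, of the vectors \(\{z_v^{\mathcal{P}}\}_{\mathcal{P}\in\mathcal{M}}\). It therefore lies in \(\mathcal{H}_{\mathcal{M}}\) by the definition of span. The expressivity consequence is the contrapositive. Let \(\Pi\) be the orthogonal projector onto \(\mathcal{H}_{\mathcal{M}}^{\perp}\). Then \(\Pi h_v=0\) for every \(v\) and every choice of weights \(\beta\). Any downstream linear readout \(w_k^\top h_v\) thus depends only on the component of \(w_k\) inside \(\mathcal{H}_{\mathcal{M}}\). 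So a signal \(u\) with \(\Pi u\neq 0\), for example a rare informative neighbor reachable by no meta-path in \(\mathcal{M}\), cannot be recovered from \(h_v\).

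Part (ii) is the substantive claim. I would model it exactly on Theorem~\ref{lem:softmax_attention_dilution}, with meta-paths playing the role of neighbors. Write \(\beta_{\mathcal{P}}(v)=\exp(s_{\mathcal{P}})/\sum_{\mathcal{Q}\in\mathcal{M}}\exp(s_{\mathcal{Q}})\), where \(s_{\mathcal{P}}\) are the semantic-level scores. The ``mild non-degeneracy condition'' I would impose is uniform boundedness of the scores: there are constants \(-\infty<s_{\min}\le s_{\max}<\infty\), independent of \(|\mathcal{M}|\), with \(s_{\mathcal{P}}\in[s_{\min},s_{\max}]\) for all \(\mathcal{P}\). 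This is the finite-set analogue of ``no \(m\)-dependent exponential advantage'', and it holds, for instance, when the scores are \(q^\top\tanh(W z+b)\) with bounded parameters, as in HAN. Under this condition,
\[
B^*(v)=\frac{\sum_{\mathcal{P}\in\mathcal{M}^*}e^{s_{\mathcal{P}}}}{\sum_{\mathcal{Q}\in\mathcal{M}}e^{s_{\mathcal{Q}}}}
\le\frac{|\mathcal{M}^*|\,e^{s_{\max}}}{|\mathcal{M}^*|\,e^{s_{\max}}+(|\mathcal{M}|-|\mathcal{M}^*|)\,e^{s_{\min}}}
\le e^{s_{\max}-s_{\min}}\,\frac{|\mathcal{M}^*|}{|\mathcal{M}|-|\mathcal{M}^*|}.
\]
The first inequality uses that \(x\mapsto x/(x+y)\) is increasing in \(x\) and decreasing in \(y\). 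When \(|\mathcal{M}|\ge 2|\mathcal{M}^*|\), the final bound is \(O(|\mathcal{M}^*|/|\mathcal{M}|)\) with constant \(2e^{s_{\max}-s_{\min}}\). As an alternative statement matching Theorem~\ref{lem:softmax_attention_dilution}, I would also give the averaged version. Assume the group means of \(e^{s}\) over primary and secondary paths converge to \(\mu^*,\mu>0\). Then \(B^*(v)\to |\mathcal{M}^*|\mu^*/(|\mathcal{M}^*|\mu^*+(|\mathcal{M}|-|\mathcal{M}^*|)\mu)\), which again decays like \(|\mathcal{M}^*|/|\mathcal{M}|\).

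The monotonicity statement follows by the same inequality. Fix \(\mathcal{M}^*\) and add a secondary path \(\mathcal{Q}\). The numerator of \(B^*(v)\) is unchanged, while the denominator increases by \(e^{s_{\mathcal{Q}}}>0\), so \(B^*(v)\) strictly decreases. One caveat is that the scores of the existing paths must not change when \(\mathcal{Q}\) is added. This is true for HAN-style attention, where each score depends only on its own path.

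The main obstacle is making the non-degeneracy condition honest. Without a uniform gap bound \(s_{\max}-s_{\min}=O(1)\), a learned score on a primary path could grow like \(\log|\mathcal{M}|\) and defeat the dilution. I would therefore state the assumption explicitly, either as bounded scores or as bounded parameters combined with bounded \(\tanh\) activations, and note that it is exactly the regime in which the theorem is meant to apply.
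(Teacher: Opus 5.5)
Your proof is correct. Part (ii) is essentially the paper's argument: bound the primary numerator of the softmax from above and the normalizer from below.

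\textbf{Non-degeneracy hypothesis.} The paper assumes less than you do. It requires only \(\max_{\mathcal{P}\in\mathcal{M}^*}s_{\mathcal{P}}(v)\le\overline{s}\), together with at least \(c|\mathcal{M}|\) secondary paths satisfying \(s_{\mathcal{Q}}(v)\ge\underline{s}\). This gives \(B^*(v)\le (e^{\overline{s}-\underline{s}}/c)\,|\mathcal{M}^*|/|\mathcal{M}|\) in one line, with no \(|\mathcal{M}|\ge 2|\mathcal{M}^*|\) step. Your two-sided bound on every score implies these conditions (with \(c=1/2\)). However, it needlessly bounds primary scores from below and secondary scores from above. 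It also rules out secondary paths with arbitrarily low scores, which the paper tolerates as long as a constant fraction stay above \(\underline{s}\).

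\textbf{What your version adds.} Your monotonicity argument actually proves the ``decreases'' clause. Adding a path leaves the numerator unchanged and strictly enlarges the normalizer, provided scores are path-local. The paper infers this only from the shrinking upper bound.

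\textbf{Part (i).} The paper frames the consequence through recoverability, \(\epsilon_{\mathcal{M}}=\inf_f\mathbb{E}\|\xi(v)-f(\{z_v^{\mathcal{P}}\}_{\mathcal{P}\in\mathcal{M}})\|\), that is, through functional dependence on the meta-path summaries. You instead use an orthogonal projector. These are different notions, and your illustrative example belongs to the former. A neighbor reachable by no meta-path is invisible because no \(z_v^{\mathcal{P}}\) depends on it, not because its message has a component outside \(\mathcal{H}_{\mathcal{M}}\). Indeed, once \(|\mathcal{M}|\ge d\) the span may be all of \(\mathbb{R}^d\), so \(\Pi=0\) and your criterion says nothing. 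In the other direction, the linear fact \(h_v\in\mathcal{H}_{\mathcal{M}}\) holds whatever \(\beta\) depends on, which the functional view does not give for free. It is worth stating which reading of ``cannot be expressed'' you are proving.
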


See Appendix~\ref{app:meta_path_tradeoff_full} for full statements and proofs.

We abstract semantic dilution as attenuation of the primary component:
\(
\label{eq:attenuation_model}
h_v = B^*(v)\, h_v^* + r_v,
\)
where \(h_v^*\) is the primary semantic component and \(r_v\) aggregates secondary components.

Let \(\mathcal{P}(v)\) be the set of positive labels for node \(v\), with \(|\mathcal{P}(v)|=L\).
A standard multi-label head uses
\(
\label{eq:multilabel_head}
z_{v,k} = w_k^\top h_v,\qquad k=1,\dots,K,
\)
and optimizes the BCE loss
\(
\label{eq:bce_loss}
\mathcal{L}_{\mathrm{ML}}(v) = \sum_{k=1}^K \Big(
- y_{v,k}\log \sigma(z_{v,k})
- (1-y_{v,k})\log\big(1-\sigma(z_{v,k})\big)
\Big),
\)
where
\(
y_{v,k}\in\{0,1\}.
\)

\begin{theorem}[Multi-label amplification of coverage-dilution dilemma]
\label{lem:multilabel_amplification_main}
Assume \(\|w_k\|\le W\) for all \(k\) and \(\|h_v^*\|\le H\) and setting \(r_v=0\) for
\(h_v = B^*(v)\, h_v^* + r_v\), as the number of positive labels \(\mathcal{P}(v)\) grows:

\textbf{(i) Positive-loss floor grows.}
\begin{equation}
\label{eq:pos_loss_lowerbound_main}
\sum_{k\in \mathcal{P}(v)} \log\bigl(1+\exp(-z_{v,k})\bigr)
\;\ge\;
|\mathcal{P}(v)|\,\mathcal{C}
-\frac{|\mathcal{P}(v)|}{2}WHB^{*}(v)
\end{equation}
Where $\mathcal{C}$ is constant and  as \(B^*(v)\to 0\), the bound approaches \(|\mathcal{P}(v)|\ \mathcal{C}\) up to \(O(B^*(v))\).

\textbf{(ii) Positive-label error terms accumulate.}
\begin{equation}
\label{eq:grad_bound_main}
\Big|\nabla_{h_v}\mathcal{L}_{\mathrm{ML}}(v)\Big|
\;\le\;
W \sum_{k=1}^K \big|\sigma(z_{v,k})-y_{v,k}\big|.
\end{equation}
Moreover, for any \(\delta\ge 0\), if \(|z_{v,k}|\le \delta\) and \(y_{v,k}=1\), then
\begin{equation}
\label{eq:pos_error_lower_simple_main}
\big|\sigma(z_{v,k})-1\big|
\;\ge\;
\sigma(-\delta).
\end{equation}
Hence, if \(|z_{v,k}|\le \delta\) holds for all \(k\in\mathcal{P}(v)\), then
\begin{equation}
\label{eq:sum_pos_error_lower_main}
\sum_{k=1}^K \big|\sigma(z_{v,k})-y_{v,k}\big|
\;\ge\;
|\mathcal{P}(v)|\,\sigma(-\delta),
\end{equation}
so the cumulative positive-label contribution is \(\Omega(|\mathcal{P}(v)|)\) for fixed small \(\delta\).
\end{theorem}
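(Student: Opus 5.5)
The plan is to treat the two parts separately, since both reduce to elementary facts about the softplus function $\ell(z)=\log(1+e^{-z})$ and the sigmoid. Throughout we use $r_v=0$, so $h_v=B^*(v)\,h_v^*$. By Cauchy--Schwarz,
\[
|z_{v,k}| = |w_k^\top h_v| \le \|w_k\|\,B^*(v)\,\|h_v^*\| \le WHB^*(v)
\]
for every $k$, using $B^*(v)\in[0,1]$ as a sum of softmax weights.

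For part (i), I would lower-bound $\ell$ by its tangent line at $0$. The function $\ell$ is convex, with $\ell(0)=\log 2$ and $\ell'(0)=-\sigma(0)=-\tfrac12$. Convexity gives $\ell(z)\ge \log 2-\tfrac12 z$ for all $z$. Combining this with $z_{v,k}\le WHB^*(v)$ gives
\[
\ell(z_{v,k})\ge \log 2-\tfrac12 WHB^*(v)
\]
for each $k\in\mathcal{P}(v)$. Summing over the $|\mathcal{P}(v)|$ positive labels yields the stated bound with $\mathcal{C}=\log 2$. The limit remark follows because the correction term is linear in $B^*(v)$.

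For part (ii), I would differentiate the BCE loss directly. The standard identity gives $\partial \mathcal{L}_{\mathrm{ML}}/\partial z_{v,k}=\sigma(z_{v,k})-y_{v,k}$, so
\[
\nabla_{h_v}\mathcal{L}_{\mathrm{ML}}=\sum_k\big(\sigma(z_{v,k})-y_{v,k}\big)\,w_k .
\]
The triangle inequality together with $\|w_k\|\le W$ then gives \eqref{eq:grad_bound_main}.

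For \eqref{eq:pos_error_lower_simple_main}, take $y_{v,k}=1$. Then $|\sigma(z)-1|=1-\sigma(z)=\sigma(-z)$. Since $\sigma$ is increasing and $-z\ge -\delta$, we get $\sigma(-z)\ge\sigma(-\delta)$.

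For \eqref{eq:sum_pos_error_lower_main}, I would drop the nonnegative terms with $y_{v,k}=0$ and apply the previous bound to each $k\in\mathcal{P}(v)$. This gives at least $|\mathcal{P}(v)|\,\sigma(-\delta)$, and hence the $\Omega(|\mathcal{P}(v)|)$ claim for fixed $\delta$.

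Two small links in the argument need care.
\begin{itemize}
\item The hypothesis $|z_{v,k}|\le\delta$ is automatically satisfied with $\delta=WHB^*(v)$ by the bound above. This connects dilution ($B^*\to0$) to the error floor $\sigma(-\delta)\to\tfrac12$ per positive label, and it is worth stating explicitly.
\item The tangent-line step is where the constant $\tfrac12$ in \eqref{eq:pos_loss_lowerbound_main} comes from, so I would double-check the sign of $\ell'(0)$.
\end{itemize}
Neither step is a real obstacle; the statement is essentially a direct calculation.
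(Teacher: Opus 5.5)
Your proposal is correct and follows the paper's proof essentially step for step. The matching steps are: the Cauchy--Schwarz bound $|z_{v,k}|\le WHB^*(v)$; the convexity tangent-line bound $\log(1+e^{-z})\ge \log 2 - z/2$, giving $\mathcal{C}=\log 2$; the explicit BCE gradient $\sum_k(\sigma(z_{v,k})-y_{v,k})w_k$ with the triangle inequality; and monotonicity of $\sigma$ for the per-positive error floor, including the instantiation $\delta = WHB^*(v)$ that the paper also makes explicit.
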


See Appendix~\ref{app:multilabel_attenuation_full} for full statements and proofs.

These results reveal an inherent dilemma of meta-path-based attention: limiting the meta-path set will cause coverage constraint while enlarging it can simultaneously dilute primary semantics.
Theorem~\ref{lem:meta_path_subspace_dilution_main} explains how a discrete meta-path prior can both (i) constrain representations to \(\mathcal{H}_{\mathcal{M}}\) and (ii) dilute primary semantics (\(B^*(v)\downarrow\)) as \(|\mathcal{M}|\) grows.
Theorem~\ref{lem:multilabel_amplification_main} shows that once \(B^*(v)\) is small, multi-label learning suffers a per-positive loss floor near \(\log 2\) and an \(O(|\mathcal{P}(v)|)\) accumulation of positive-label error terms (and thus potentially large gradient pressure near zero logits).
These effects arise from representational attenuation in \(h_v\), rather than the specific output normalization.

\subsection{Discussion}
Across both paradigms, our analysis identifies a common bottleneck: primary semantics are progressively attenuated in heterogeneous attention mechanisms, and this attenuation is systematically amplified under multi-label supervision.

Specifically, relation adaptive attention mechanisms, while flexible,
lack explicit structural anchoring and are prone to semantic dilution
as heterogeneous neighborhoods expand.
In contrast, meta-path-based methods impose strong structural inductive biases,
but face a dilemma between semantic coverage and semantic dilution.
Both of them is further amplified in multi-label settings.

Taken together, an effective solution must simultaneously preserve hard structural focus
on primary semantics and while maintaining calibrated capacity for secondary semantics,
motivating the role-separate model design proposed next.

\section{Model: FOCAL}\label{Methodology}
To address above challenges, we propose \textbf{RoSA} (\textbf{Ro}le-\textbf{S}eparated \textbf{A}ttention for Multi-label Heterogeneous Graphs).
The overall framework of FOCAL is shown in Figure.~\ref{fig:rosa_architecture}.

\subsection{Role-Separated Attention Mechanism}\label{sec:rosa_arch}
FOCAL adopts a role-separated attention architecture composed of two complementary attention components. At each layer, node representations are updated in parallel by a coverage-oriented attention (COA) module and an anchoring-oriented attention (AOA) module. The outputs of these two components are subsequently integrated to form the updated node representations, which are then propagated to the next layer. 

\paragraph{Coverage-Oriented Attention (COA)}
This component aims to maximize semantic coverage in heterogeneous graphs by aggregating information from diverse node and relation types.
Unlike anchoring-oriented mechanisms, COA does not impose structural constraints on the aggregation space.
Instead, it treats all heterogeneous neighbors as potential semantic contributors.
This enables flexible modeling of relational and attribute interactions across the full heterogeneous neighborhood and also improves the visibility of diverse secondary semantics.

Formally, we denote \(\mathcal{A}_{\mathrm{cov}}^{(l)}\) as the COA module at layer \(l\).
For a target node \(t\), COA operates on the unconstrained heterogeneous neighborhood 
\(
\mathcal{C}(t) := \bigcup_{r \in \mathcal{R}} \mathcal{N}_r(t),
\)
where \(\mathcal{R}\) denotes the set of relation types and \(\mathcal{N}_r(t)\) represents the neighbors of \(t\) connected via relation \(r\).
Given \(\mathcal{C}(t)\), COA updates the node representation by a learnable, relation-aware attention operator:
\begin{equation}
h^{(l)}_{\mathrm{COA}}(t)
=
\mathcal{A}_{\mathrm{cov}}^{(l)}
\left(
\mathcal{C}(t),\,
h^{(l-1)}(s)
\right),
\end{equation}
where \(\mathcal{A}_{\mathrm{cov}}\) is parameterized to flexibly model heterogeneous interactions, prioritizing expressive aggregation over semantic concentration.

We instantiate COA using a transformer-style heterogeneous attention mechanism.
For head \(i\in\{1,\dots,h\}\), we compute
\begin{equation}
\alpha^{i,cov}_{t,s}
=
\operatorname*{softmax}\limits_{s\in\mathcal{C}(t)}
\left(
\frac{
\left(Q^{i}(t)\right)^\top K^{i}(s)
}{
\sqrt{d}
}
\cdot \mu^{(l)}_{r(t,s),i}
\right),
\end{equation}
where \(r(t,s)\) denotes the relation type between \(t\) and \(s\),
\(Q^i(t)=W^i_Q h^{(l-1)}(t)\), \(K^i(s)=W^i_K h^{(l-1)}(s)\), and \(V^i(s)=W^i_V h^{(l-1)}(s)\).
The COA output is
\begin{equation}
h^{(l)}_{\mathrm{COA}}(t)
=
\big\|_{i=1}^{d}
\sum_{s\in \mathcal{C}(t)}
\alpha^{i,cov}_{t,s}\,V^{i}(s).
\end{equation}
To adapt aggregation to heterogeneous relations, COA introduces a learnable relation-head reweighting term \(\mu^{(l)}_{r,i} > 0\),
which modulates the attention strength of each relation type without imposing structural constraints.

\begin{table*}[htbp]
\centering
\caption{Overall performance in  multi-label node classification task for heterogeneous graphs. Best results are in \textbf{bold}, and second-best are \underline{underlined}.}
\label{tab:main_results}
\resizebox{\linewidth}{!}{
\begin{tabular}{l ccc ccc ccc}
\toprule
 \multirow{2}{*}{\textbf{Model}} &
\multicolumn{3}{c}{\textbf{IMDB}} &
\multicolumn{3}{c}{\textbf{Amazon}} &
\multicolumn{3}{c}{\textbf{CITE}} \\
\cmidrule(lr){2-4}\cmidrule(lr){5-7}\cmidrule(lr){8-10}
& \textbf{Micro-F1} $\uparrow$& \textbf{Macro-F1} $\uparrow$ & \textbf{Sample-F1} $\uparrow$
& \textbf{Micro-F1} $\uparrow$& \textbf{Macro-F1} $\uparrow$ & \textbf{Sample-F1} $\uparrow$
& \textbf{Micro-F1} $\uparrow$ & \textbf{Macro-F1} $\uparrow$ & \textbf{Sample-F1} $\uparrow$\\
\midrule

RGAT & $\underline{0.6286}_{\pm0.0041}$ & $0.5831 _{\pm0.0035}$ & $0.6113 _{\pm0.0039}$ & $0.7616 _{\pm0.0058}$ & $0.5015 _{\pm0.0072}$ & $0.7518 _{\pm0.0061}$ & $0.2925 _{\pm0.0044}$ & $0.0055 _{\pm0.0002}$ & $0.2490 _{\pm0.0089}$ \\
MAGNN & $0.6046 _{\pm0.0045}$ & $0.5601 _{\pm0.0038}$ & $0.5776 _{\pm0.0042}$ & $\underline{0.9010} _{\pm0.0031}$ & $0.5190 _{\pm0.0069}$ & $\underline{0.8541} _{\pm0.0040}$ & $0.3851_{\pm0.0051}$ & $0.0064 _{\pm0.0003}$ & $\underline{0.3820} _{\pm0.0095}$ \\
SimpleHGN& $0.6274_{\pm0.0052}$ & $0.5898 _{\pm0.0047}$ & $0.6093 _{\pm0.0050}$ & $0.8606 _{\pm0.0043}$ & $\underline{0.6564}_{\pm0.0055}$ & $0.8502 _{\pm0.0045}$ & $0.3623 _{\pm0.0048}$ & $0.0062 _{\pm0.0003}$ & $0.3463 _{\pm0.0101}$ \\
HPN& $0.6003 _{\pm0.0061}$ & $0.5537 _{\pm0.0053}$ & $0.5744 _{\pm0.0058}$ & $0.5683 _{\pm0.0088}$ & $0.1272 _{\pm0.0105}$ & $0.5765 _{\pm0.0092}$ & $0.3341 _{\pm0.0067}$ & $0.0060 _{\pm0.0004}$ & $0.3302 _{\pm0.0112}$ \\
\midrule
CorGCN & $0.5752 _{\pm0.0082}$ & $0.1228 _{\pm0.0115}$ & $0.5171 _{\pm0.0091}$ & $0.5991 _{\pm0.0103}$ & $0.5487 _{\pm0.0098}$ & $0.5774 _{\pm0.0108}$ & $\underline{0.3736} _{\pm0.0055}$ & $0.0063 _{\pm0.0004}$ & $0.3706 _{\pm0.0093}$ \\
TriPer & $0.6628 _{\pm0.0451}$ & $\underline{0.6462} _{\pm0.0512}$ & $0.5310 _{\pm0.0398}$ & $0.5660 _{\pm0.0423}$ & $0.3075 _{\pm0.0288}$ & $0.5162 _{\pm0.0415}$ & $0.1124 _{\pm0.0156}$ & $0.0070 _{\pm0.0015}$ & $0.0538 _{\pm0.0112}$ \\
\midrule
FOCAL-COA & $0.6277 _{\pm0.0033}$ & $0.6002_{\pm0.0029}$ & $\underline{0.6196} _{\pm0.0031}$ & $0.7480 _{\pm0.0048}$ & $0.6285 _{\pm0.0051}$ & $0.7797 _{\pm0.0045}$ & $0.2567 _{\pm0.0039}$ & $\textbf{0.0255} _{\pm0.0011}$ & $0.2484 _{\pm0.0078}$ \\
FOCAL-AOA & $0.5708 _{\pm0.0075}$ & $0.5047 _{\pm0.0068}$ & $0.5327 _{\pm0.0071}$ & $0.6290 _{\pm0.0081}$ & $0.1904 _{\pm0.0120}$ & $0.6256 _{\pm0.0085}$ & $0.3729 _{\pm0.0053}$ & $0.0063 _{\pm0.0004}$ & $0.3712 _{\pm0.0099}$ \\
\midrule
\textbf{FOCAL} & $\textbf{0.6692} _{\pm0.0021}$ & $\textbf{0.6477} _{\pm0.0018}$ & $\textbf{0.6513} _{\pm0.0020}$ & $\textbf{0.9305} _{\pm0.0015}$ & $\textbf{0.8184} _{\pm0.0025}$ & $\textbf{0.9249} _{\pm0.0017}$ & $\textbf{0.4224} _{\pm0.0032}$ & $\underline{0.0153} _{\pm0.0009}$ & $\textbf{0.4115} _{\pm0.0056}$ \\
\bottomrule
\end{tabular}
}
\end{table*}

\paragraph{Anchoring-Oriented Attention (AOA)}
This component is designed to preserve and stabilize primary semantics by restricting attention to a structurally anchored neighborhood. 
Unlike COA, which aggregates from an unconstrained heterogeneous neighborhood to maximize semantic coverage, AOA intentionally narrows the aggregation space to suppress interference from secondary nodes and mitigate semantic dilution.
Similarly, we denote \(\mathcal{A}_{\mathrm{anc}}^{(l)}\) as the AOA module at layer \(l\).
In our setting, we manually specify meta-path $\mathcal{P}$ to characterize the primary structural pattern. 
The meta-path $\mathcal{P}$ is selected based on domain knowledge to reflect the task-critical primary semantics.
For a target node $t$, let $\mathcal{N}_{\mathcal{P}}(t)$ denote the set of nodes reachable from $t$ via $\mathcal{P}$. 
AOA performs attention-based aggregation only over $\mathcal{N}_{\mathcal{P}}(t)$, yielding
\begin{equation}
h^{(l)}_{\mathrm{AOA}}(t)
=
\mathcal{A}_{\mathrm{anc}}^{(l)}
\left(
\mathcal{N}_{\mathcal{P}}(t),\,
h^{(l-1)}(s)
\right).
\end{equation}
Since attention is normalized within $\mathcal{N}_{\mathcal{P}}(t)$, AOA concentrates representational capacity on the anchored structural semantics and reduces the dilution effect caused by heterogeneous noise. We instantiate $\mathcal{A}_{\mathrm{anc}}$ using a multi-head attention operator defined on the meta-path-induced neighborhood:
\begin{equation}
h^{(l)}_{\mathrm{AOA}}(t)
=
\big\|_{i=1}^{d}
\sum_{s\in\mathcal{N}_{\mathcal{P}}(t)}
\alpha^{i,anc}_{t,s}\;
W^{i}_{\mathcal{P}}\, h^{(l-1)}(s),
\end{equation}
where $\|\,$ denotes concatenation over heads and $W^{i}_{\mathcal{P}}$ is the learnable projection matrix for head $i$ under meta-path $\mathcal{P}$.
The attention coefficient 
\(
\alpha^{i,anc}_{t,s}
=
\operatorname{softmax}_{s\in\mathcal{N}_{\mathcal{P}}(t)}\left(e^{i}_{t,s}\right),
\)
where \(e^{i}_{t,s}\)
is computed as
\begin{equation}
\begin{aligned}
e^{i}_{t,s}
&=
\mathrm{LeakyReLU}\left(
\left(a^{i}_{\mathcal{P}}\right)^{\top}
\left[
W^{i}_{\mathcal{P}} h^{(l-1)}(t)
\,\|\,
W^{i}_{\mathcal{P}} h^{(l-1)}(s)
\right]
\right)
\end{aligned}
\end{equation}
By anchoring aggregation to a single, manually specified meta-path neighborhood (for extension to multiple, see Appendix~\ref{app:aoa_multi_metapath}), AOA provides a stable structural backbone that prioritizes primary semantics, while deliberately sacrificing heterogeneous coverage. 
This anchored representation complements the coverage-oriented representation produced by COA and stabilizes learning under multi-label supervision.

\subsection{Role-Guided Adaptive Integrator}
\paragraph{Role-Guided Fusion Mechanism.}
At each layer, FOCAL produces two complementary representations for a target node $t$: a coverage-oriented representation  and an anchoring-oriented representation. Since these two components serve fundamentally different roles, the integration mechanism explicitly preserves the anchoring function of AOA while allowing COA to act as a contextual augmentation. To this end, FOCAL adopts a role-aware integration mechanism composed of a bidirectional gated fusion module and a semantic-preserving residual propagation scheme.
\begin{equation}
\begin{aligned}
g^{(i)}_t &= \sigma\left(W^{(i)}_g \left[ h^{(l)}_{\mathrm{COA}}(t) \,\|\, h^{(l)}_{\mathrm{AOA}}(t) \right] + b^{(i)}_g \right),
\end{aligned}
\end{equation}
where $i\in\{1,2\}$, $\sigma$ denotes the sigmoid function, $\,\|\,$ represents concatenation, and $g^{(1)}_t, g^{(2)}_t \in (0,1)^d$ are dimension-wise gating vectors. The fused representation is then computed as:
\begin{equation}
h^{(l)}_{\mathrm{fuse}}(t)
= g^{(1)}_t \odot h^{(l)}_{\mathrm{COA}}(t)
+ g^{(2)}_t \odot h^{(l)}_{\mathrm{AOA}}(t),
\end{equation}
where $\odot$ denotes element-wise multiplication.
The two gates are independently parameterized and are not constrained to sum to one. This design enables non-symmetric integration: the model can amplify, suppress, or selectively preserve signals from each channel. As a result, AOA is adaptively protected against being overwhelmed by heterogeneous contextual signals from COA.
\paragraph{Semantic-Preserving Adaptive Aggregation.}
To prevent primary semantics from being overwritten during deep propagation, FOCAL further introduces a node-adaptive residual connection.
Given the fused representation $h^{(l)}_{\mathrm{fuse}}(t)$ and the previous-layer representation $h^{(l-1)}(t)$, we define:
\begin{equation}
h^{(l)}(t)
= \alpha^{(l)}_t \odot h^{(l)}_{\mathrm{fuse}}(t)
+ \left(1 - \alpha^{(l)}_t \right) \odot h^{(l-1)}(t),
\end{equation}
where $ \alpha^{(l)}_t = \sigma\left(W_{\alpha} \left[ h^{(l)}_{\mathrm{fuse}}(t) \,\|\, h^{(l-1)}(t) \right] + b_{\alpha}
\right)$.
This residual formulation functions as a semantic-preserving channel, ensuring that primary semantics stabilized by AOA can be consistently propagated across layers.

Through this role-aware integration mechanism, FOCAL couples heterogeneous semantic coverage with structurally anchored representations, while maintaining a stable semantic backbone across layers. This design directly addresses the semantic attenuation and multi-label amplification effects identified in Section~\ref{sec:theoretical_analysis}.

\subsection{Joint Multi-label Learning}
Multi-label learning on heterogeneous graphs is typically characterized by label imbalance~\cite{shi2020multi}, where frequent and rare labels coexist and contribute unequally to the training dynamics. To account for this, FOCAL adopts the Asymmetric Loss (ASL)~\cite{ridnik2021asymmetric} as a part of task objective, defined as $\mathcal{L}_{\mathrm{ASL}}$
which modulates the contribution of samples according to label occurrence frequency and prediction difficulty.

FOCAL further introduces a consistency regularizer to coordinate the two role-separated representations. As discussed in Section~\ref{sec:rosa_arch}, COA and AOA respectively model secondary heterogeneous semantics and primary anchored semantics. To this end, we impose a cosine-based consistency constraint between the two components:
\begin{equation}
\mathcal{L}_{\mathrm{consist}} =
\frac{1}{|\mathcal{V}|}
\sum_{t \in \mathcal{V}}
\left(
1 -
\frac{
h^{(L)}_{\mathrm{COA}}(t)^{\top} h^{(L)}_{\mathrm{AOA}}(t)
}{
\big|h^{(L)}_{\mathrm{COA}}(t)\big| \, \big|h^{(L)}_{\mathrm{AOA}}(t)\big|
}
\right).
\end{equation}
This term encourages the coverage-oriented and anchoring-oriented representations to remain geometrically aligned while retaining their role-specific characteristics. The final  objective of FOCAL is defined as: 
\begin{equation}
\mathcal{L}
= \mathcal{L}_{\mathrm{ASL}} + \lambda \, \mathcal{L}_{\mathrm{consist}}.
\end{equation}
where $\lambda$ controls the strength of the consistency constraint.

\begin{theorem}\label{thm:rosa_main}
FOCAL addresses the attention dilution and coverage-dilution dilemma problems by lifting the lower bound of attention dilution to $\gamma_v := \min_{k} g^{(2)}_{v,k}$ by AOA, while COA preserves coverage over secondary nodes.
\end{theorem}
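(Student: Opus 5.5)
The plan is to make the informal statement precise by decomposing the fused representation $h^{(l)}_{\mathrm{fuse}}(v)=g^{(1)}_v\odot h^{(l)}_{\mathrm{COA}}(v)+g^{(2)}_v\odot h^{(l)}_{\mathrm{AOA}}(v)$ into a primary part and a residual part, in the form of the attenuation model $h_v=B^*(v)h_v^*+r_v$ used before Theorem~\ref{lem:multilabel_amplification_main}. I will then show that the effective primary coefficient is bounded below by $\gamma_v=\min_k g^{(2)}_{v,k}$, independently of the number of secondary neighbours $m$ and of $|\mathcal{M}|$. I will assume the natural modelling hypothesis that the AOA neighbourhood $\mathcal{N}_{\mathcal{P}}(v)$ consists of primary nodes, i.e.\ the meta-path $\mathcal{P}$ is chosen so that $\mathcal{N}_{\mathcal{P}}(v)\subseteq\mathcal{S}^*(v)$. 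This is exactly the role the paper assigns to the manually specified meta-path.

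\textbf{Step 1 (AOA carries full primary mass).} Because $\alpha^{i,anc}_{v,s}$ is a softmax normalized over $\mathcal{N}_{\mathcal{P}}(v)$ alone, the primary attention mass of the AOA channel, computed as in~\eqref{eq:primary_mass_def1} but restricted to this channel, is $\sum_{s\in\mathcal{N}_{\mathcal{P}}(v)}\alpha^{i,anc}_{v,s}=1$ for every head $i$. This holds regardless of $m$. So the limit of Theorem~\ref{lem:softmax_attention_dilution}, namely $n^*\mu^*/(n^*\mu^*+m\mu)$, does not apply to this channel. Since only one meta-path is used, the semantic-level dilution $B^*=O(|\mathcal{M}^*|/|\mathcal{M}|)$ of Theorem~\ref{lem:meta_path_subspace_dilution_main}(ii) is also absent: $|\mathcal{M}|=|\mathcal{M}^*|=1$.

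\textbf{Step 2 (gate lower bound).} Write $h_{\mathrm{fuse}}(v)=\mathrm{diag}(g^{(2)}_v)h_{\mathrm{AOA}}(v)+\mathrm{diag}(g^{(1)}_v)h_{\mathrm{COA}}(v)$. Set $h_v^*:=h_{\mathrm{AOA}}(v)$, and note that the operator $\mathrm{diag}(g^{(2)}_v)\succeq\gamma_v I$. Then for any linear head $w_k$ that is aligned with the primary signal, $w_k^\top \mathrm{diag}(g^{(2)}_v)h_v^*$ is bounded below by $\gamma_v$ times the primary contribution, in the sense $\|\mathrm{diag}(g^{(2)}_v)h_v^*\|\ge\gamma_v\|h_v^*\|$. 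Hence the effective $B^*(v)$ in the attenuation model satisfies $B^*_{\mathrm{eff}}(v)\ge\gamma_v$. Plugging this into Theorems~\ref{lem:multilabel_loss_grad_amp} and~\ref{lem:multilabel_amplification_main} replaces $A^*(v)$ or $B^*(v)$ by a quantity that is at least $\gamma_v$. The loss floor then becomes $|\mathcal{P}(v)|\mathcal{C}-\tfrac{|\mathcal{P}(v)|}{2}WH\gamma_v$, and the primary gradient is no longer $O(L\cdot A^*)\to0$. Because the gates are unconstrained sigmoids that are not forced to sum to one, $\gamma_v$ does not shrink when COA's neighbourhood grows. I will also check the residual step $h^{(l)}=\alpha\odot h_{\mathrm{fuse}}+(1-\alpha)\odot h^{(l-1)}$. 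By induction over layers, the primary coefficient stays bounded below by a convex combination of $\gamma_v$ and the previous layer's bound, so it remains positive.

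\textbf{Step 3 (coverage).} Here I show that $h_{\mathrm{fuse}}$ is not confined to $\mathcal{H}_{\mathcal{M}}$, because $g^{(1)}_v\odot h_{\mathrm{COA}}(v)$ aggregates over the full neighbourhood $\mathcal{C}(v)=\bigcup_r\mathcal{N}_r(v)$. Every secondary node receives strictly positive softmax weight, and the relation weights $\mu_{r,i}>0$ can raise this weight, so the coverage constraint of Theorem~\ref{lem:meta_path_subspace_dilution_main}(i) is removed.

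The main obstacle is Step 2. The COA term may partially cancel the AOA term, so "primary mass" is not literally a coefficient in the fused vector. I will handle this by defining the lifted bound at the level of the primary component: project onto the AOA contribution and treat $g^{(1)}\odot h_{\mathrm{COA}}$ as the residual $r_v$. If that fails, I will fall back on the $r_v=0$ setting, as Theorem~\ref{lem:multilabel_amplification_main} itself does.
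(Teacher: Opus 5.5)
Your anchoring half follows the paper's argument. The inequality $\|g^{(2)}_v\odot h_{\mathrm{AOA}}(v)\|_2\ge\gamma_v\|h_{\mathrm{AOA}}(v)\|_2$, obtained from $\mathrm{diag}(g^{(2)}_v)\succeq\gamma_v I$, is exactly Theorem~\ref{thm:primary_lb}. Your Step~1 (AOA normalizes only over primary semantics, so enlarging the secondary part cannot change its weights) is Theorem~\ref{thm:decoupling}(a).

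\textbf{The genuine gap is Step~3.} First, $h_{\mathrm{fuse}}(v)\notin\mathcal{H}_{\mathcal{M}}$ is not the right target. With elementwise gates, even $g^{(2)}_v\odot h_{\mathrm{AOA}}(v)$ generally leaves $\mathrm{span}(z_v^{\mathcal{P}})$, so span membership says nothing about which inputs are visible. The coverage constraint of Theorem~\ref{lem:meta_path_subspace_dilution_main}(i) is a statement about which information can influence $h_v$. Second, strictly positive COA softmax weights give at most that $h_{\mathrm{COA}}(v)$ depends on a secondary node $u$. They do not give that $h_{\mathrm{fuse}}(v)$ depends on $u$. Both gates are functions of $h_{\mathrm{COA}}(v)$ and $h_{\mathrm{AOA}}(v)$, so a perturbation of $\mathbf{x}(u)$ reaches $h_{\mathrm{fuse}}$ along several paths that can cancel.

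The paper handles exactly this. It defines coverage through the dependency set of Definition~\ref{def:dep} and expands $\partial h^{(l)}_{\mathrm{fuse}}(t)/\partial\mathbf{x}(u)$ into four terms. The direct term $\mathrm{diag}(g^{(1)}_t)\,\partial h^{(l)}_{\mathrm{COA}}(t)/\partial\mathbf{x}(u)$ is nonzero because $\mathrm{diag}(g^{(1)}_t)$ is positive definite. A non-degeneracy assumption then says the gate-derivative and AOA terms cancel it only on a measure-zero parameter set. This yields $\mathrm{Dep}(h_{\mathrm{fuse}})\supseteq\mathrm{Dep}(h_{\mathrm{COA}})$ (Theorems~\ref{the:dep_subset} and~\ref{thm:decoupling}(b)). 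You need some such non-degeneracy hypothesis on $F(a,b)=f_1(a,b)\odot a+f_2(a,b)\odot b$, and you should add it. The cancellation issue you flagged for Step~2 is really the one that matters here. For Step~2 itself, the paper, like your fallback, simply bounds the AOA term in isolation.

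The downstream claims in Step~2 also overreach and are best dropped.
\begin{itemize}
\item Theorems~\ref{lem:multilabel_loss_grad_amp} and~\ref{lem:multilabel_amplification_main} give a loss \emph{lower} bound and a gradient \emph{upper} bound, both monotone in $A^*$ and $B^*$. Replacing these quantities by something $\ge\gamma_v$ only removes the bounds' force; it does not show a lower loss or a non-vanishing primary gradient. Their setting ($r_v=0$, scalar attenuation) also does not describe $h_{\mathrm{fuse}}$.
\item The logit-level inequality $w_k^\top\mathrm{diag}(g^{(2)}_v)h_v^*\ge\gamma_v\,w_k^\top h_v^*$ needs a sign condition such as $w_k\odot h_v^*\ge 0$ coordinatewise. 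For $w_k\odot h_v^*=(2,-1)$ and $g^{(2)}_v=(\gamma_v,\tfrac12)$ with $\gamma_v<\tfrac12$, it fails.
\item Your stated reason that $\gamma_v$ stays away from $0$ as $\mathcal{C}(v)$ grows is wrong: it is not that the gates are unnormalized. $g^{(2)}_v$ takes $h_{\mathrm{COA}}(v)$ as input. Uniformity in $m$ holds only because $h_{\mathrm{COA}}(v)$ is a convex combination of value vectors, hence bounded when features are bounded. The paper itself claims only $\gamma_v>0$.
\item The layer-wise residual induction is similarly heuristic, since the primary component differs from layer to layer.
\end{itemize}
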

See Appendix~\ref{app:theory} for full statements and proofs.

\section{Experiments}
\subsection{Experimental Setup}
\paragraph{Datasets} We evaluate FOCAL on three heterogeneous graph datasets: IMDB~\cite{lv2021we}, Amazon~\cite{gupta2025persona}, and CITE~\cite{zhang2025cite}. 
All three are heterogeneous graph datasets designed for multi-label node classification, spanning different domains and graph scales.
Detailed statistics and construction procedures are provided in Appendix~\ref{appendix:exp_setting}.

\begin{table}[htbp]
\centering
\caption{Statistal information of datasets used in this paper.}
\resizebox{\linewidth}{!}{
\begin{tabular}{cccccccc}
\toprule
\textbf{Dataset} & \textbf{Domain} & \textbf{Node Type} & \textbf{\# Nodes} & \textbf{Edge Type} &\textbf{\# Edges} &\textbf{\# Label} \\
\midrule
\textbf{IMDB} & Movie & \makecell{Movie, Actor, \\Director, \\Keyword} & 21,420 & \makecell{Movie-Actor, \\Movie-Director, \\Movie-Keyword} &40,341 & 5  \\
\midrule
\textbf{Amazon} & E-commerce & \makecell{User, Product, \\Persona} & 13,973& \makecell{User-Product, \\Product-Persona}  &17,251 & 6   \\
\midrule 
\textbf{CITE} & Chemistry & \makecell{Paper, \\Keyword, \\Substrate, \\Author} & 213,277 & \makecell{Paper-Keyword, \\Paper-Paper, \\Paper-Substrate,\\ Paper-Author}  & 938,473 & 107   \\
\bottomrule
\end{tabular}
}
\vspace{-2mm}
\label{tab:dataset_info}
\end{table}

\paragraph{Baselines}
We compare FOCAL with a suite of representative baselines covering different modeling paradigms. 
(1) \textbf{Heterogeneous GNNs:} RGAT~\cite{busbridge2019relational}, MAGNN~\cite{fu2020magnn}, SimpleHGN~\cite{lv2021we}, HPN~\cite{ji2021heterogeneous}, including attention-based heterogeneous aggregation and meta-path-based aggregation paradigms.
(2) \textbf{Multi-label Heterogeneous Graph Models:} CorGCN~\cite{bei2025correlation} and TriPer~\cite{gupta2025persona}. 
Single-label baselines are adapted by replacing \texttt{softmax} with \texttt{sigmoid} and setting the threshold to 0.5.

\paragraph{Implementation Details} For each dataset, we distinguish primary and secondary semantic sources according to the dataset schema and task definition. Specifically, persona, keywords, and substrate nodes are treated as primary semantic anchors for Amazon, IMDB, and CITE respectively, while the remaining node types are regarded as secondary semantic sources. We report mean\(\pm\)std over 5 runs. Model configurations and their hyper-parameters are detailed in~\ref{appendix:exp_setting}. 
We take Micro-F1, Macro-F1, and Sample-F1 as evaluation metrics.

\subsection{Overall Performance}
Table~\ref{tab:main_results} reports the results of heterogeneous multi-label node classification. FOCAL achieves the best results against all baselines on all 9 metrics. Compared to the strongest baseline, FOCAL improves Micro-F1 by 0.64-3.73 points and Sample-F1 by 2.95-7.08 points across datasets.
We attribute the gains to FOCAL’s role-separated aggregation: AOA anchors message passing on task-aligned primary semantics, whereas COA maintains coverage over the remaining node types and complementary relations; the two signals are then adaptively fused.
On IMDB, where keywords provide direct cues for movie categories, FOCAL’s anchoring helps prevent attention from being diluted by auxiliary relations (e.g., actor/director), leading to the strongest and most balanced performance across metrics. On Amazon, persona serves as a highly informative anchor for predicting user attributes; FOCAL further benefits from covering user-product interactions, yielding consistent gains over strong attention-based heterogeneous aggregators. On the most challenging CITE benchmark, FOCAL remains clearly superior, demonstrating scalability under large-graph, long-tailed conditions. The results confirm that separating primary anchoring from secondary coverage is crucial for stable and transferable multi-label learning on heterogeneous graphs.
Overall, these results are consistent with the coverage-dilution dilemma analysis in Section~\ref{sec:theoretical_analysis} and role-separated design of Section~\ref{Methodology}. 

\subsection{Over-Smoothing Effects}
We study the over-smoothing behavior under deep stacking by varying the number of message-passing layers \(L\) while keeping the remaining hyperparameters fixed. The results are shown in Figure.~\ref{fig:exp_smoothing}
As depth increases, other baselines exhibit a clear Micro-F1 drop, consistent with over-smoothing: repeated propagation progressively homogenizes node representations and amplifies irrelevant heterogeneous signals, reducing class separability. In contrast, FOCAL remains stable and even slightly improves with more layers, indicating better preservation of discriminative semantics under deep stacking. We attribute this robustness to \textbf{(1)} a bidirectional adaptive fusion gate that prevents the anchoring channel from being overwhelmed by contextual noise, and \textbf{(2)} an adaptive residual propagation that retains a semantic backbone across layers. Overall, the depth-sensitivity gap highlights FOCAL’s stronger resistance to over-smoothing.

\begin{figure}[htbp]
    \centering
    \includegraphics[width=\linewidth]{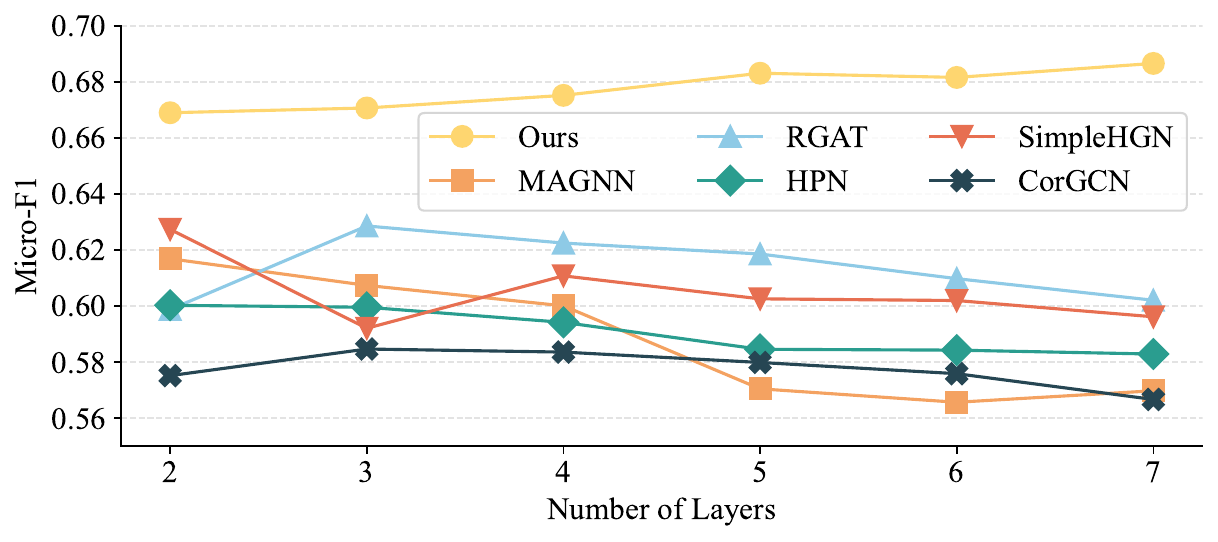}
    \caption{Results of over-smoothing effects.}
    \label{fig:exp_smoothing}
    \vspace{-4mm}
\end{figure}

\subsection{Model Analysis}

\paragraph{Running Efficiency}
We evaluate the running efficiency of FOCAL and other baselines. Figure~\ref{fig:run_time} reports the average running time per training epoch and the Micro-F1 results.
FOCAL achieves a balance, delivering strong Micro-F1 with only moderate per-epoch overhead compared to lightweight baselines, which are faster but notably less accurate. Moreover, FOCAL is more effective than heavier competitors, requiring less training time while attaining better accuracy. Overall, these results suggest that FOCAL lies close to the Pareto frontier of accuracy-efficiency trade-offs under the same setting.

\begin{figure}[htbp]
    \centering
    \includegraphics[width=\linewidth]{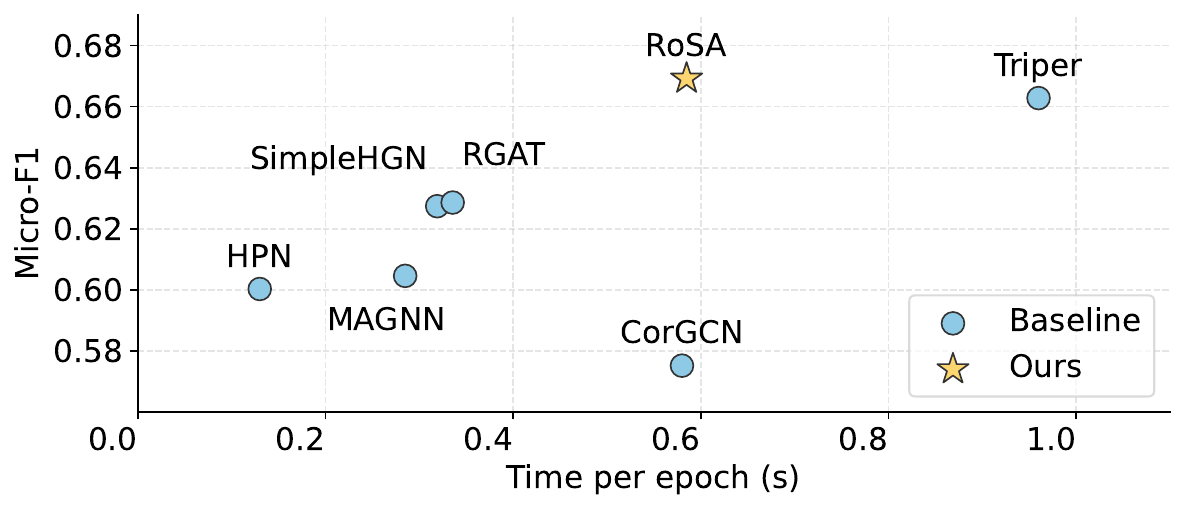}
    \vspace{-6mm}
    \caption{The average running time for each testing sample.}
    \label{fig:run_time}
    \vspace{-4mm}
\end{figure}

\paragraph{Ablation Study}
To assess the contribution of key components in FOCAL, we conduct ablation studies (results in the appendix). We consider variants that \textbf{(1)} remove COA or AOA, \textbf{(2)} replace the bidirectional gated fusion with simplified fusion (single-gate or naive sum/concat) and replace the adaptive residual with a fixed residual, and \textbf{(3)} drop ASL or the branch consistency regularizer. The results are provided in Table~\ref{tab:main_results} and Table~\ref{tab:ablation}. Overall, removing any component consistently degrades performance, confirming that FOCAL benefits from both role separation and controlled integration. See detailed analysis in Appendix~\ref{Ablation_Study}.

\begin{table}[htbp]
\centering
\caption{Ablation study on the IMDB dataset.}
\label{tab:ablation}
\setlength{\tabcolsep}{6pt}
\resizebox{\linewidth}{!}{
\begin{tabular}{lccc}
\toprule
\textbf{Model} & \textbf{Micro-F1} & \textbf{Macro-F1} & \textbf{Sample-F1} \\

\midrule
FOCAL-w/o gate fusion & 0.6586 & 0.6297 & 0.6231\\
FOCAL-w/o sum/concat fusion & 0.6399 & 0.6121 & 0.6443 \\
FOCAL-w/o adaptive residual & 0.6631 & 0.6375 & 0.6495 \\
\midrule
 FOCAL-w/o $\mathcal{L}_\text{ASL}$ & 0.6362 & 0.6030 & 0.6154 \\
 FOCAL-w/o $\mathcal{L}_\text{consist}$ & 0.6638 & 0.6391 & 0.6497 \\
\midrule
\textbf{FOCAL} & \textbf{0.6692} & \textbf{0.6477} & \textbf{0.6513} \\
\bottomrule
\end{tabular}
}
\end{table}

\paragraph{Parameter Analysis}
We conduct sensitivity analyses on the IMDB dataset to investigate the robustness of FOCAL with respect to key hyperparameters like the consistency weight $\lambda$, the choice of primary node type and the hidden dimension $d$. The results are shown in Figure.~\ref{fig:exp_param}. Using keywords as anchors (MKM) performs best, while anchoring on actors (MAM) / directors (MDM) or treating all node types (All) as primary degrades performance, indicating that effective anchoring should focus on the most label-discriminative semantics. With MKM fixed, performance peaks at a small $\lambda$ (around [0.05, 0.10]) and drops as $\lambda$ increases, suggesting that consistency regularization is helpful but should remain moderate to avoid collapsing the complementary roles of COA and AOA. 
We further vary the hidden dimension $d$ from 16 to 256. Increasing $d$ consistently improves all performances with the largest improvements observed when moving from small to moderate, followed by diminishing returns at larger value. This trend indicates that FOCAL benefits from sufficient capacity to encode heterogeneous semantics, while remaining stable to the choice of $d$ once a moderate dimensionality is reached. Detailed analyses are shown in Appendix~\ref{Parameter_Analysis}.

\begin{figure}[htbp]%
    \centering 
     \subfloat[$\lambda$]
     { \includegraphics[width=0.31\linewidth]{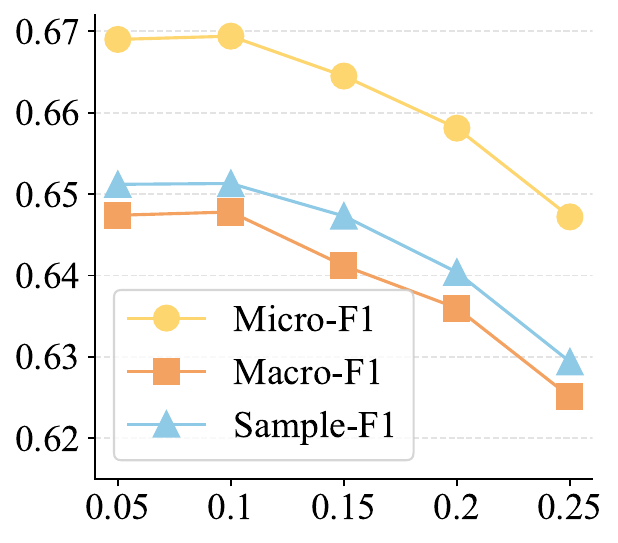} }
    \subfloat[Meta Path]{ \includegraphics[width=0.31\linewidth]{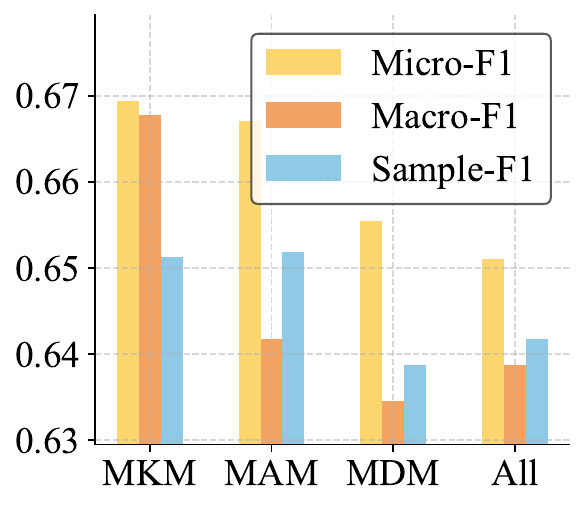} }
     \subfloat[Hidden Dimension]{ \includegraphics[width=0.31\linewidth]{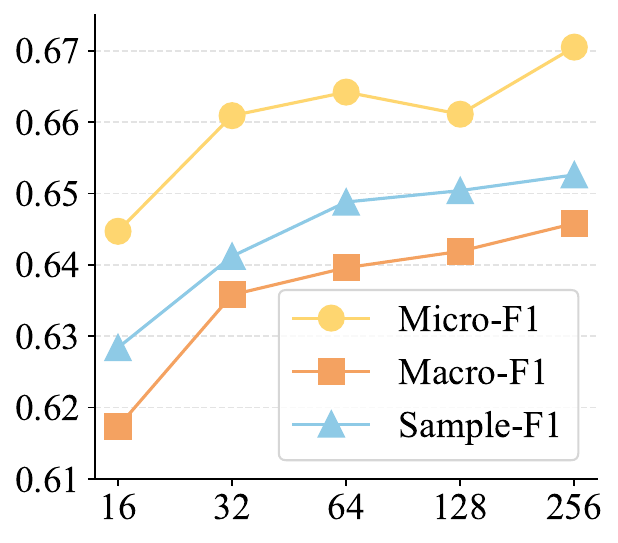} }
    \caption{Parameter analysis of FOCAL.} 
    \label{fig:exp_param}
\end{figure}

\section{Related work}
Multi-label node classification on \emph{homogeneous} graphs has been widely studied, where a central idea is to improve prediction by explicitly modeling label dependencies and label-aware representations (e.g., ML-GCN~\cite{gao2019semi}, LANC~\cite{zhou2021multi}, LARN~\cite{xiao2022semantic}, CorGCN~\cite{bei2025correlation}, and LIP~\cite{sunmulti}). However, these methods are primarily developed under homogeneous assumptions and do not directly address multi-typed nodes/relations or heterogeneous semantic interactions. In parallel, heterogeneous GNNs learn representations via meta-path/meta-graph semantics or relation-aware message passing (e.g., HAN~\cite{wang2019heterogeneous}, MAGNN~\cite{fu2020magnn}, GTN~\cite{yun2019graph}, HGT~\cite{hu2020heterogeneous}, and SimpleHGN~\cite{lv2021we}). While they can be \emph{adapted} to multi-label prediction by replacing the final softmax with sigmoid and using a binary multi-label loss, such a direct adaptation typically lacks explicit mechanisms for modeling label correlations and may not capture how heterogeneous semantics jointly determine multiple labels. Research that targets \emph{heterogeneous multi-label} learning is comparatively scarce: early approaches predate modern GNNs and rely on heuristic propagation or shallow transductive embeddings~\cite{zhou2014activity,dos2016multilabel}, and more recent work is often task-specific (e.g., Persona/TriPer~\cite{gupta2025persona}). Overall, general-purpose solutions that jointly model heterogeneous interactions and multi-label dependencies remain under-explored.
Detailed discussions are provided in Appendix \ref{sec:appendix_related_work}.

\section{Conclusion}
This paper proposes FOCAL, a role-separated attention framework that disentangles semantic coverage and semantic anchoring. FOCAL combines a coverage-oriented attention branch (COA) for unconstrained heterogeneous context aggregation with an anchoring-oriented attention branch (AOA) restricted to meta-path-induced primary neighborhoods, and integrates them via role-aware gated fusion and semantic-preserving residual propagation. 
Our theoretical analysis and experimental results further indicates that the proposed FOCAL achieves a better performance than other state-of-the-art models.

\clearpage

\bibliography{cite}
\bibliographystyle{icml2026}

\newpage
\appendix
\onecolumn

\section{Full Statements and Proofs of Section 3}

\subsection{Full Statements and Proofs of Theorem~\ref{lem:softmax_attention_dilution}}\label{sec:appendix_attn_dilution}

\begin{theorem}[Attention dilution of heterogeneous graph]\label{lem:attention_dilution}
Fix a target node \(t\) whose incoming neighbors are partitioned into a primary set \(\mathcal{S}^*\) and a secondary set \(\mathcal{S}^c\), with
\(|\mathcal{S}^*|=n^*\) and \(|\mathcal{S}^c|=m\).
Consider a softmax attention mechanism
\begin{equation}
\alpha_{t,s}=\frac{\exp(E_{t,s})}{\sum_{u\in\mathcal{S}^*\cup\mathcal{S}^c}\exp(E_{t,u})}.
\end{equation}

Define the total attention mass on the primary set
\begin{equation}
A^*(t)
:= \sum_{s\in\mathcal{S}^*}\alpha_{t,s}
= \frac{\sum_{i=1}^{n^*}\exp(E_i^*)}
      {\sum_{i=1}^{n^*}\exp(E_i^*) + \sum_{j=1}^{m}\exp(E_j)} .
\end{equation}
Let $X_{n^*}:=\sum_{i=1}^{n^*}\exp(E_i^*)$, $Y_m:=\sum_{j=1}^{m}\exp(E_j)$.
Assume that as \(m\to\infty\) (allowing \(n^*=n^*(m)\) to depend on \(m\)):
\begin{itemize}
\item[(\romannumeral 1)] \(\frac{1}{m}Y_m\xrightarrow[]{\mathrm{a.s.}}\mu\) for some \(\mu\in(0,\infty)\).
\item[(\romannumeral 2)] Either (a) \(\frac{1}{n^*}X_{n^*}\xrightarrow[]{\mathrm{a.s.}}\mu^*\) for some \(\mu^*\in(0,\infty)\), or (b) \(n^*\) is fixed (independent of \(m\)) and \(X_{n^*}<\infty\) almost surely.
\item[(\romannumeral 3)] In case (a), \(\mu^*/\mu=O(1)\) independent of \(m\).
\end{itemize}

Then in case (a),
$A^*(t)=\frac{n^*\mu^*}{n^*\mu^*+m\mu}+ o_{\mathrm{a.s.}}(1)$.
In particular, if \(m/n^*\to\infty\), then \(A^*(t)\to 0\) almost surely and
$A^*(t)=O\!\left(\frac{n^*}{m}\right)$.
In case (b), \(A^*(t)\to 0\) almost surely and \(A^*(t)=O(1/m)\).
\end{theorem}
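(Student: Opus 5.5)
The argument is a ratio manipulation on a single almost-sure event. Write
\[
A^*(t)=\frac{X_{n^*}}{X_{n^*}+Y_m}=\frac{1}{1+Y_m/X_{n^*}},
\]
and work on the probability-one event $\Omega_0$ on which the almost-sure convergences in (i) and (ii) hold. This event is the intersection of two (or, in case (b), one plus the finiteness event) probability-one events, so it still has probability one. On $\Omega_0$ the problem is deterministic: compare $Y_m/X_{n^*}$ with $m\mu/(n^*\mu^*)$.

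\textbf{Case (a).} Set $\hat\mu_m:=Y_m/m$ and $\hat\mu^*:=X_{n^*}/n^*$. Then exactly
\[
A^*(t)=\frac{n^*\hat\mu^*}{n^*\hat\mu^*+m\hat\mu_m}.
\]
Define $f(x,y):=x/(x+y)$ for $x,y>0$ and set $x=n^*\hat\mu^*$, $y=m\hat\mu_m$. The function $f$ is homogeneous of degree $0$, so $f(x,y)=g(r)$ with $r:=y/x=(m/n^*)(\hat\mu_m/\hat\mu^*)$ and $g(r):=1/(1+r)$. The target is $g(r_0)$ with $r_0=(m/n^*)(\mu/\mu^*)$. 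Write $\rho:=\hat\mu_m\mu^*/(\mu\hat\mu^*)$, so that $r=\rho\, r_0$ and $\rho\to1$ a.s. Then
\[
|g(\rho r_0)-g(r_0)|=\frac{|\rho-1|\,r_0}{(1+\rho r_0)(1+r_0)}\le \frac{|\rho-1|}{\min(\rho,1)}.
\]
The inequality holds because $r_0/(1+r_0)\le1$ and $1+\rho r_0\ge \min(\rho,1)(1+r_0)$. This bound does not depend on $r_0$, so it is uniform in how $n^*$ and $m$ grow together. Since $\rho\to1$, the difference is $o_{\mathrm{a.s.}}(1)$, which proves the first claim.

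This uniformity is the only delicate point, and I expect it to be the main obstacle. The naive approach of plugging the limits into the formula fails when $n^*=n^*(m)$, because the ratio $m/n^*$ can diverge. Assumption (iii) guarantees that $\mu^*/\mu$ is a fixed finite positive constant, so $\rho$ is well defined. The argument also implicitly requires $n^*(m)\to\infty$ for (ii)(a) to be meaningful; I will simply read (ii)(a) as a statement along the given sequence.

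For the "in particular" part, suppose $m/n^*\to\infty$. Then $r\to\infty$ a.s., so $A^*=g(r)\le 1/r=(n^*/m)(\hat\mu^*/\hat\mu_m)$. The factor $\hat\mu^*/\hat\mu_m$ converges to $\mu^*/\mu=O(1)$, which gives $A^*=O(n^*/m)\to0$ a.s.

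\textbf{Case (b).} Here $X:=X_{n^*}$ is an a.s.\ finite random variable that does not depend on $m$. Since each $\exp(E_i^*)>0$, we also have $X>0$. Then
\[
A^*(t)=\frac{X}{X+Y_m}\le\frac{X}{Y_m}=\frac1m\cdot\frac{X}{Y_m/m}.
\]
Because $Y_m/m\to\mu>0$, for all sufficiently large $m$ we have $Y_m/m\ge\mu/2$. Hence $A^*\le 2X/(\mu m)=O(1/m)$, with a pathwise, random constant, and in particular $A^*(t)\to0$ a.s.
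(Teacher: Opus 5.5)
Your proposal is correct and follows essentially the same route as the paper: you rewrite $A^*(t)$ in terms of the empirical averages $X_{n^*}/n^*$ and $Y_m/m$ on the almost-sure event, and in case (b) you bound $A^*(t)\le X_{n^*}/Y_m\le 2X_{n^*}/(\mu m)$ exactly as the paper does. Your explicit bound $|g(\rho r_0)-g(r_0)|\le|\rho-1|/\min(\rho,1)$, uniform in $m/n^*$, is a cleaner justification than the paper's appeal to the continuous mapping theorem, which does not directly handle the varying ratio $n^*/m$; likewise, your estimate $A^*(t)\le (n^*/m)\,\hat\mu^*/\hat\mu_m$ gives $O(n^*/m)$ for $A^*(t)$ itself, whereas the paper bounds only the leading term.
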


\begin{proof}
By definition,
\begin{equation}
A^*(t)=\frac{X_{n^*}}{X_{n^*}+Y_m}.
\end{equation}

\noindent\textbf{Case (a).}
Divide numerator and denominator by \(m\):
\begin{equation}
A^*(t)
=
\frac{\frac{X_{n^*}}{m}}
{\frac{X_{n^*}}{m}+\frac{Y_m}{m}}
=
\frac{\frac{n^*}{m}\cdot \frac{X_{n^*}}{n^*}}
{\frac{n^*}{m}\cdot \frac{X_{n^*}}{n^*}+\frac{Y_m}{m}}.
\end{equation}
Under assumption (\romannumeral 1) and (a) in (\romannumeral 2) mentioned above, almost surely,
\begin{equation}
\frac{X_{n^*}}{n^*}\to \mu^*,
\qquad
\frac{Y_m}{m}\to \mu.
\end{equation}
Since \(\mu>0\), by the continuous mapping theorem,
\begin{equation}
A^*(t)
=
\frac{\frac{n^*}{m}\mu^*}{\frac{n^*}{m}\mu^*+\mu}
+ o_{\mathrm{a.s.}}(1)
=
\frac{n^*\mu^*}{n^*\mu^*+m\mu}
+ o_{\mathrm{a.s.}}(1).
\end{equation}
If \(m/n^*\to\infty\), then the leading term converges to \(0\), hence \(A^*(t)\to 0\) almost surely. Moreover,
\begin{equation}
\frac{n^*\mu^*}{n^*\mu^*+m\mu}
\le
\frac{n^*\mu^*}{m\mu}
=
O\!\left(\frac{n^*}{m}\right),
\end{equation}
and (\romannumeral 3) ensures \(\mu^*/\mu\) does not grow with \(m\).

\noindent\textbf{Case (b).}
Here \(X_{n^*}\) is a fixed (in \(m\)) almost surely finite random variable. Under (\romannumeral 1), \(Y_m/m\to\mu>0\) almost surely, so \(Y_m\to\infty\) almost surely and thus
\begin{equation}
A^*(t)=\frac{X_{n^*}}{X_{n^*}+Y_m}\xrightarrow[]{\mathrm{a.s.}}0.
\end{equation}
Furthermore, on any sample path where \(Y_m/m\to\mu\), for all sufficiently large \(m\) we have \(Y_m\ge (\mu/2)m\), hence
\begin{equation}
A^*(t)\le \frac{X_{n^*}}{Y_m}\le \frac{2X_{n^*}}{\mu}\cdot \frac{1}{m}=O\!\left(\frac{1}{m}\right).
\end{equation}
\end{proof}

\subsection{Full Statements and Proofs of Theorem~\ref{lem:multilabel_loss_grad_amp}}\label{app:multilabel_amp_full}

\begin{theorem}[Multi-label amplification of attention dilution]
\label{lem:multilabel_amplification}
Fix a node $t$ with one-step attention aggregation
$h_t=\sum_{s\in\mathcal{S}^*\cup\mathcal{S}^c}\alpha_{t,s}\,m_{s\to t}$.
Let $A^*(t):=\sum_{s\in\mathcal{S}^*}\alpha_{t,s}$ and
$h_t^*:=\sum_{s\in\mathcal{S}^*}\alpha_{t,s}\,m_{s\to t}$.
Assume (i) $\|m_{s\to t}\|\le M$ for all $s\in\mathcal{S}^*$, and
(ii) for the classifier $z_k=w_k^\top h_t$ and each positive label $k\in\mathcal{P}(t)$,
there exist constants $a>0$ and $b\le aM$ such that $z_k \le a\|h_t^*\|+b$.
Then, letting $L:=|\mathcal{P}(t)|$, the positive-label BCE term satisfies
\begin{equation}
\sum_{k\in\mathcal{P}(t)} -\log \sigma(z_k)
\;\ge\;
L\cdot\Bigl(-\log \sigma\!\bigl(aMA^*(t)+b\bigr)\Bigr).
\end{equation}
\end{theorem}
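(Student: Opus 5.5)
The plan is a two-line argument: bound the norm of the primary component $h_t^*$ by the primary attention mass, push that bound through the assumed logit control, and then use monotonicity of $-\log\sigma(\cdot)$ term by term.

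\textbf{Step 1: bound $\|h_t^*\|$.} By the triangle inequality and assumption (i), together with $\alpha_{t,s}\ge 0$ from the softmax,
\begin{equation*}
\|h_t^*\|\le\sum_{s\in\mathcal{S}^*}\alpha_{t,s}\|m_{s\to t}\|\le M\sum_{s\in\mathcal{S}^*}\alpha_{t,s}=M\,A^*(t).
\end{equation*}

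\textbf{Step 2: bound each positive logit.} Combining Step 1 with assumption (ii) and $a>0$ gives, for every $k\in\mathcal{P}(t)$,
\begin{equation*}
z_k\le a\|h_t^*\|+b\le aM\,A^*(t)+b.
\end{equation*}

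\textbf{Step 3: pass to the loss.} The map $x\mapsto-\log\sigma(x)=\log(1+e^{-x})$ is strictly decreasing on $\mathbb{R}$. Hence the upper bound on $z_k$ becomes a lower bound on each term:
\begin{equation*}
-\log\sigma(z_k)\ge-\log\sigma\bigl(aMA^*(t)+b\bigr)\quad\text{for all }k\in\mathcal{P}(t).
\end{equation*}
The right-hand side does not depend on $k$, so summing over the $L=|\mathcal{P}(t)|$ positive labels yields the claimed inequality.

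\textbf{Where care is needed.} The only points to check are the nonnegativity of the attention weights in Step 1 and the direction of monotonicity in Step 3. The side condition $b\le aM$ is not needed for the inequality itself. It only ensures the argument $aMA^*(t)+b$ stays at most $2aM$, since $A^*(t)\le 1$, which keeps the bound meaningful. The bound degrades to $L\cdot(-\log\sigma(b))$ as $A^*(t)\to 0$, which is how it connects to Theorem~\ref{lem:attention_dilution}. I expect no real obstacle.
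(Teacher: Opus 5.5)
Your proposal is correct and follows the paper's proof essentially line for line: the triangle-inequality bound $\|h_t^*\|\le MA^*(t)$, substitution into assumption (ii), monotonicity of $-\log\sigma$ (which the paper verifies via $\ell'(x)=-(1-\sigma(x))<0$), and summation over the $L$ positive labels. Your observation that the side condition $b\le aM$ is never used in the argument is also accurate, since the paper's proof does not use it either.
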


\begin{proof}
We first upper bound the magnitude of the primary component $h_t^*$ in terms of the primary attention mass $A^*(t)$.
By the triangle inequality and assumption (i),
\begin{equation}
\|h_t^*\|
=
\left\|\sum_{s\in\mathcal{S}^*}\alpha_{t,s}\,m_{s\to t}\right\|
\le
\sum_{s\in\mathcal{S}^*}\alpha_{t,s}\,\|m_{s\to t}\|
\le
\sum_{s\in\mathcal{S}^*}\alpha_{t,s}\,M
=
MA^*(t).
\end{equation}
Next, apply assumption (ii): for each $k\in\mathcal{P}(t)$,
\begin{equation}
z_k \le a\|h_t^*\|+b \le aMA^*(t)+b.
\end{equation}
Now consider $\ell(x):=-\log\sigma(x)$. Since
\begin{equation}
\ell'(x)= -\frac{\sigma'(x)}{\sigma(x)} = -\bigl(1-\sigma(x)\bigr) < 0,
\end{equation}
$\ell(\cdot)$ is monotonically decreasing. Therefore, from $z_k \le aMA^*(t)+b$ we get
\begin{equation}
-\log\sigma(z_k)=\ell(z_k)\ge \ell\bigl(aMA^*(t)+b\bigr)
= -\log\sigma\!\bigl(aMA^*(t)+b\bigr).
\end{equation}
Summing the above inequality over all positive labels $k\in\mathcal{P}(t)$ yields
\begin{equation}
\sum_{k\in\mathcal{P}(t)} -\log \sigma(z_k)
\ge
\sum_{k\in\mathcal{P}(t)} \Bigl[-\log\sigma\!\bigl(aMA^*(t)+b\bigr)\Bigr]
=
L\cdot\Bigl(-\log\sigma\!\bigl(aMA^*(t)+b\bigr)\Bigr),
\end{equation}
which proves the stated lower bound.

Finally, note that the bound is (i) monotonically degrading in $A^*(t)$ because
$-\log\sigma(\cdot)$ is decreasing and $aM>0$, and (ii) linear in $L$ by construction.
Moreover, combining with Theorem~\ref{lem:attention_dilution}, if $A^*(t)\to 0$ ,
then by continuity of $-\log\sigma(\cdot)$ the lower bound converges to
\(
L\cdot\bigl(-\log\sigma(b)\bigr).
\)
\end{proof}

\begin{proposition}
\label{prop:gradient_attenuation}
Under the assumptions of Theorem~\ref{lem:attention_dilution} and Theorem~\ref{lem:multilabel_amplification},
consider the multi-label BCE loss \(\mathcal{L}_{\mathrm{ML}}(t)\).
Assume that for all \(k\in\mathcal{P}(t)\) and all \(s\in\mathcal{S}^*\), there exists a constant \(C>0\) independent of \(m\) such that
$\left\|\nabla_{m_{s\to t}} z_k \right\| \le C\,\alpha_{t,s}$.
Then the gradient norm of the multi-label loss with respect to the collection of primary messages satisfies
$\left\|\nabla_{\{m_{s\to t}:s\in\mathcal{S}^*\}} \mathcal{L}_{\mathrm{ML}}(t)\right\|=O\!\left(L \cdot A^*(t)\right)$.
Consequently, in the regime \(m/n^*\to\infty\) (so that \(A^*(t)\to 0\)),
$\left\|\nabla_{\{m_{s\to t}:s\in\mathcal{S}^*\}} \mathcal{L}_{\mathrm{ML}}(t)\right\| =O\!\left(L \cdot \frac{n^*}{m}\right)$.
In particular, if additionally \(n^*=O(1)\), then the above becomes \(O(L/m)\).
\end{proposition}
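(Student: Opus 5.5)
The argument is a chain-rule computation followed by a substitution of the mass bounds from Theorem~\ref{lem:attention_dilution}.

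\textbf{Step 1: the loss gradient.} Write the BCE loss as $\mathcal{L}_{\mathrm{ML}}(t)=\sum_{k}\ell_k(z_k)$, where
\[
\ell_k(z)=-y_k\log\sigma(z)-(1-y_k)\log(1-\sigma(z)).
\]
Its derivative is $\partial\ell_k/\partial z_k=\sigma(z_k)-y_k$, so $|\partial\ell_k/\partial z_k|\le 1$. For each primary message $m_{s\to t}$ the chain rule gives
\[
\nabla_{m_{s\to t}}\mathcal{L}_{\mathrm{ML}}(t)=\sum_k(\sigma(z_k)-y_k)\,\nabla_{m_{s\to t}}z_k .
\]

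\textbf{Step 2: restricting to the positive labels.} The assumed bound $\|\nabla_{m_{s\to t}}z_k\|\le C\alpha_{t,s}$ is stated only for $k\in\mathcal{P}(t)$. I will therefore read the claimed quantity as the gradient of the positive-label part $\sum_{k\in\mathcal{P}(t)}-\log\sigma(z_k)$, consistent with display~\eqref{eq:multilabel_amp_main_grad}. Alternatively, I will note that the same bound holds for every $k$ when $z_k=w_k^\top h_t$ with $\|w_k\|\le W$, since then $\nabla_{m_{s\to t}}z_k=\alpha_{t,s}w_k$, which yields $O(K\,A^*)$. I will state this restriction explicitly, because it is the only delicate point in the argument. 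Under this reading, the triangle inequality gives
\[
\|\nabla_{m_{s\to t}}\|\le \sum_{k\in\mathcal{P}(t)}C\alpha_{t,s}=LC\alpha_{t,s}.
\]

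\textbf{Step 3: aggregating over the primary messages.} The gradient with respect to the collection $\{m_{s\to t}:s\in\mathcal{S}^*\}$ is a concatenated vector. Its Euclidean norm is at most the sum of the block norms, so it is bounded by
\[
LC\sum_{s\in\mathcal{S}^*}\alpha_{t,s}=LC\,A^*(t)=O(L\,A^*(t)),
\]
with a constant independent of $m$.

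\textbf{Step 4: substituting the mass bounds.} Finally I plug in Theorem~\ref{lem:attention_dilution}.
\begin{itemize}
\item In case (a) with $m/n^*\to\infty$, we have almost surely $A^*(t)=O(n^*/m)$, which gives $O(L n^*/m)$.
\item When $n^*=O(1)$, either case (b), or case (a) with bounded $n^*$, gives $A^*(t)=O(1/m)$, hence $O(L/m)$.
\end{itemize}
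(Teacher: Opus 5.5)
Your proposal is correct and follows essentially the same route as the paper. Both use the chain rule with $|\sigma(z_k)-y_k|\le 1$, the assumed bound $C\,\alpha_{t,s}$, and a triangle-inequality sum over $k\in\mathcal{P}(t)$ and $s\in\mathcal{S}^*$ giving $C\,L\,A^*(t)$, followed by substitution of the rates from Theorem~\ref{lem:attention_dilution}; the paper's proof also sums only over positive labels while nominally treating the full loss $\mathcal{L}_{\mathrm{ML}}(t)$, so your explicit restriction to the positive-label part is the reading under which the stated $O(L\cdot A^*(t))$ rate actually holds.
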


\begin{proof}
For each label \(k\), the gradient of the loss with respect to the logit is
\begin{equation}
\frac{\partial \ell_k}{\partial z_k}
=
\sigma(z_k) - y_k,
\end{equation}
with \(|\sigma(z_k)-y_k|\le 1\).
By the chain rule, for any \(s\in\mathcal{S}^*\),
\begin{equation}
\nabla_{m_{s\to t}} \ell_k
=
(\sigma(z_k)-y_k)\,\nabla_{m_{s\to t}} z_k,
\end{equation}
and therefore
\begin{equation}
\|\nabla_{m_{s\to t}} \ell_k\|
\le
\|\nabla_{m_{s\to t}} z_k\|
\le
C\,\alpha_{t,s}.
\end{equation}
Summing over all \(k\in\mathcal{P}(t)\) and \(s\in\mathcal{S}^*\) yields
\begin{equation}
\left\|\nabla_{\{m_{s\to t}:s\in\mathcal{S}^*\}} \mathcal{L}_{\mathrm{ML}}(t)\right\|
\le
\sum_{k\in\mathcal{P}(t)}\sum_{s\in\mathcal{S}^*}
\|\nabla_{m_{s\to t}} \ell_k\|
\le
\sum_{k\in\mathcal{P}(t)}\sum_{s\in\mathcal{S}^*}
C\,\alpha_{t,s}
=
C\,L\,A^*(t),
\end{equation}
which proves the \(O(LA^*(t))\) claim. The final scaling statements follow from Theorem~\ref{lem:attention_dilution}.
\end{proof}

\subsection{Full Statement and Proof of Theorem~\ref{lem:meta_path_subspace_dilution_main}}
\label{app:meta_path_tradeoff_full}

\begin{theorem}[The coverage-dilution dilemma of meta-path-based attention]
\label{thm:meta_path_tradeoff}
Consider a model with a predefined meta-path set \(\mathcal{M}\),
whose representation takes the form
$h_v = \sum_{\mathcal{P}\in\mathcal{M}} \beta_{\mathcal{P}}(v)\, z_v^{\mathcal{P}}$. Then the following properties hold:
\begin{enumerate}
    \item 
    For all nodes \(v\), the representation \(h_v\) lies in the meta-path-induced subspace
    \begin{equation}
    \mathcal{H}_{\mathcal{M}}
    :=
    \mathrm{span}\!\left(\{ z_v^{\mathcal{P}} \mid \mathcal{P}\in\mathcal{M} \}\right).
    \end{equation}

    \item 
    Let \(\mathcal{M}^*\subset\mathcal{M}\) denote a set of primary meta-paths.
    Under mild non-degeneracy conditions on semantic scores,
    the total attention mass assigned to \(\mathcal{M}^*\),
    \begin{equation}
    B^*(v)
    :=
    \sum_{\mathcal{P}\in\mathcal{M}^*}\beta_{\mathcal{P}}(v),
    \end{equation}
    satisfies
    \begin{equation}
    B^*(v)
    =
    O\!\left(\frac{|\mathcal{M}^*|}{|\mathcal{M}|}\right).
    \end{equation}
\end{enumerate}
\end{theorem}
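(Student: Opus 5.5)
Part 1 follows directly from the representation. Since \(h_v=\sum_{\mathcal{P}\in\mathcal{M}}\beta_{\mathcal{P}}(v)\,z_v^{\mathcal{P}}\) is a finite linear combination of the vectors \(z_v^{\mathcal{P}}\), it lies in their span \(\mathcal{H}_{\mathcal{M}}\). In fact it lies in the convex hull, because \(\beta_{\mathcal{P}}(v)\ge 0\) and \(\sum_{\mathcal{P}}\beta_{\mathcal{P}}(v)=1\). The coverage consequence is immediate. Let \(\Pi\) be the orthogonal projection onto \(\mathcal{H}_{\mathcal{M}}^{\perp}\). Then \(\Pi h_v=0\), so any linear readout \(w^\top h_v\) with \(w\in\mathcal{H}_{\mathcal{M}}^{\perp}\) vanishes identically. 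Hence information outside \(\mathcal{H}_{\mathcal{M}}\) cannot be expressed.

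For Part 2, I will first make the ``mild non-degeneracy conditions'' explicit, mirroring Theorem~\ref{lem:attention_dilution}. Write the semantic-level softmax as
\[
\beta_{\mathcal{P}}(v)=\frac{\exp(e_{\mathcal{P}}(v))}{\sum_{\mathcal{Q}\in\mathcal{M}}\exp(e_{\mathcal{Q}}(v))}.
\]
I assume the semantic scores are uniformly bounded: there are constants \(-\infty<e_{\min}\le e_{\mathcal{P}}(v)\le e_{\max}<\infty\), independent of \(|\mathcal{M}|\). This is the analogue of ``no \(m\)-dependent exponential advantage.'' Alternatively, one could assume \(\max_{\mathcal{P}\in\mathcal{M}^*}\exp(e_{\mathcal{P}})\le \kappa\cdot\frac{1}{|\mathcal{M}|}\sum_{\mathcal{Q}\in\mathcal{M}}\exp(e_{\mathcal{Q}})\) for a constant \(\kappa\). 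That is the weakest hypothesis that makes the argument work, and I would state both versions.

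Under bounded scores, the numerator of \(B^*(v)\) is at most \(|\mathcal{M}^*|e^{e_{\max}}\) and the denominator is at least \(|\mathcal{M}|e^{e_{\min}}\). This gives
\[
B^*(v)\le e^{e_{\max}-e_{\min}}\frac{|\mathcal{M}^*|}{|\mathcal{M}|}=O\!\left(\frac{|\mathcal{M}^*|}{|\mathcal{M}|}\right).
\]
For a sharper statement paralleling case (a) of Theorem~\ref{lem:attention_dilution}, assume the within-group averages of \(\exp(e_{\mathcal{P}})\) over \(\mathcal{M}^*\) and over \(\mathcal{M}\setminus\mathcal{M}^*\) converge to \(\nu^*,\nu>0\). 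The same divide-and-continuous-mapping argument then gives
\[
B^*(v)=\frac{|\mathcal{M}^*|\nu^*}{|\mathcal{M}^*|\nu^*+(|\mathcal{M}|-|\mathcal{M}^*|)\nu}+o(1).
\]
Since \(|\mathcal{M}|-|\mathcal{M}^*|\ge |\mathcal{M}|/2\) for large \(|\mathcal{M}|\), this is again \(O(|\mathcal{M}^*|/|\mathcal{M}|)\). The monotone decrease in \(|\mathcal{M}|\) for fixed \(|\mathcal{M}^*|\) follows from the bound.

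The main obstacle is justifying the non-degeneracy hypothesis. Without it the claim is false: a learned semantic attention could assign scores to primary meta-paths that grow like \(\log|\mathcal{M}|\), and then \(B^*(v)\) need not decay. So the care goes into stating the boundedness or ratio condition precisely and noting that it is exactly what rules out this \(|\mathcal{M}|\)-dependent exponential advantage. The remainder of the argument is elementary.
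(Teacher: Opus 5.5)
Your proof is correct and is essentially the paper's. Part 2 there likewise bounds the numerator by \(|\mathcal{M}^*|e^{\overline{s}}\) and the denominator from below by \(c|\mathcal{M}|e^{\underline{s}}\). It assumes less than your two-sided bound: an upper bound on primary scores and a lower bound on at least \(c|\mathcal{M}|\) non-primary scores, which your \(\kappa\)-condition covers with \(\kappa=e^{\overline{s}-\underline{s}}/c\). For Part 1 the paper gives an informal recoverability argument via \(\epsilon_{\mathcal{M}}\), where you give the direct span observation.

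Two cautions on your optional remarks. First, the additive form ``leading term \(+\,o(1)\)'' does not by itself give \(O(|\mathcal{M}^*|/|\mathcal{M}|)\), because the \(o(1)\) remainder may decay more slowly. Bound the ratio multiplicatively instead, using that the group averages are eventually within a constant factor of \(\nu^*\) and \(\nu\). Second, a decreasing upper bound does not show that \(B^*(v)\) itself decreases in \(|\mathcal{M}|\). For that, hold the existing scores fixed, so that adding a non-primary meta-path only enlarges the denominator.
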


\begin{proof}
We consider two complementary cases.

\textbf{Case 1: Insufficient meta-path coverage.}
Suppose that the meta-path set \(\mathcal{M}\) is restricted to a limited subset
that does not cover all task-relevant relations or node types.
Let \(\xi(v)\) denote a signal contributing to the target label \(y(v)\)
that is not fully observable through any \(\mathcal{P}\in\mathcal{M}\).

We formalize ``coverage'' via the best recoverability of \(\xi(v)\) from the meta-path-induced summaries:
\begin{equation}
\epsilon_{\mathcal{M}}
:=
\inf_{f}\ \mathbb{E}\!\Big[\big\|\xi(v)-f(\{z_v^{\mathcal{P}}\}_{\mathcal{P}\in\mathcal{M}})\big\| \Big],
\end{equation}
where the infimum is taken over measurable functions \(f\).
When \(\mathcal{M}\) becomes less expressive, \(\epsilon_{\mathcal{M}}\) increases.

Since the representation \(h_v\) is entirely determined by \(\{z_v^{\mathcal{P}} : \mathcal{P}\in\mathcal{M}\}\),
the influence of \(\xi(v)\) on \(h_v\) is necessarily mediated and attenuated.
In particular, for node pairs \((v_1,v_2)\) whose differences lie predominantly in \(\xi(v)\),
the induced representations satisfy $\|h_{v_1} - h_{v_2}\| \le C \cdot \epsilon_{\mathcal{M}}$, for some constant \(C\).
As a result, the discriminative margin achievable by classifiers operating on \(h_v\) is upper bounded,
which limits the expressive capacity under insufficient meta-path coverage.

\textbf{Case 2: Excessive meta-path coverage.}
Suppose that the meta-path set \(\mathcal{M}\) is expanded to include a large number
of meta-paths in order to increase semantic coverage.
Let \(\mathcal{M}^*\subset\mathcal{M}\) denote the subset of critical meta-paths,
with \(|\mathcal{M}^*|\ll |\mathcal{M}|\).
Recall the semantic-level attention weights
\begin{equation}
\beta_{\mathcal{P}}(v)
=
\frac{\exp(s_{\mathcal{P}}(v))}{\sum_{\mathcal{Q}\in\mathcal{M}}\exp(s_{\mathcal{Q}}(v))}.
\end{equation}
We consider the total semantic attention mass assigned to critical meta-paths,
\begin{equation}
B^*(v)
:=
\sum_{\mathcal{P}\in\mathcal{M}^*}\beta_{\mathcal{P}}(v)
=
\frac{\sum_{\mathcal{P}\in\mathcal{M}^*}\exp(s_{\mathcal{P}}(v))}
{\sum_{\mathcal{Q}\in\mathcal{M}}\exp(s_{\mathcal{Q}}(v))}.
\end{equation}

To obtain a scaling bound in \(|\mathcal{M}|\), we impose two mild conditions:
(i) critical meta-paths do not gain an unbounded score advantage as more weakly-informative meta-paths are added, and
(ii) the added meta-paths contribute a non-vanishing mass to the softmax normalizer.

We assume: 
\begin{enumerate}
\item 
\label{ass:bounded_critical_scores}
There exists a constant \(\overline{s}<\infty\) such that for all nodes \(v\),
\(
\max_{\mathcal{P}\in\mathcal{M}^*} s_{\mathcal{P}}(v) \le \overline{s}.
\)
\item
\label{ass:nondegenerate_noncritical_mass}
There exist constants \(c\in(0,1)\) and \(\underline{s}\in\mathbb{R}\) such that for all nodes \(v\),
at least \(c|\mathcal{M}|\) meta-paths in \(\mathcal{M}\setminus\mathcal{M}^*\) satisfy
\(
s_{\mathcal{Q}}(v) \ge \underline{s}.
\)
\end{enumerate}
Under \ref{ass:bounded_critical_scores}, we upper bound the numerator:
\begin{equation}
\sum_{\mathcal{P}\in\mathcal{M}^*}\exp(s_{\mathcal{P}}(v))
\le
|\mathcal{M}^*| \exp\!\left(\max_{\mathcal{P}\in\mathcal{M}^*} s_{\mathcal{P}}(v)\right)
\le
|\mathcal{M}^*| e^{\overline{s}}.
\end{equation}
Under \ref{ass:nondegenerate_noncritical_mass}, we lower bound the denominator by restricting the sum:
\begin{equation}
\sum_{\mathcal{Q}\in\mathcal{M}}\exp(s_{\mathcal{Q}}(v))
\ge
\sum_{\mathcal{Q}\in\mathcal{M}\setminus\mathcal{M}^*}\exp(s_{\mathcal{Q}}(v))
\ge
c|\mathcal{M}| \, e^{\underline{s}}.
\end{equation}
Combining the two bounds yields
\begin{equation}
B^*(v)
\le
\frac{|\mathcal{M}^*| e^{\overline{s}}}{c|\mathcal{M}| e^{\underline{s}}}
=
\frac{e^{\overline{s}-\underline{s}}}{c}\cdot \frac{|\mathcal{M}^*|}{|\mathcal{M}|}
=
O\!\left(\frac{|\mathcal{M}^*|}{|\mathcal{M}|}\right),
\end{equation}
where the hidden constant depends only on \(\overline{s}-\underline{s}\) and \(c\).
This shows that when \(\mathcal{M}\) grows by including many weakly informative meta-paths,
the semantic attention mass allocated to critical semantics is necessarily diluted,
thereby limiting the effective contribution of \(\mathcal{M}^*\) to the final representation.
In both cases, the limitation arises from the predefined, discrete nature of the meta-path set \(\mathcal{M}\).
\end{proof}

\subsection{Full Statements and Proofs of Theorem~\ref{lem:multilabel_amplification_main}}
\label{app:multilabel_attenuation_full}

\begin{theorem}[Positive-loss floor grows with the number of positives]
\label{prop:multilabel_loss_lowerbound}
Assume \(\|w_k\|\le W\) for all \(k\) and \(\|h_v^*\|\le H\) and setting \(r_v=0\) for
\(h_v = B^*(v)\, h_v^* + r_v\), the positive-part loss satisfies
\begin{equation}
\label{eq:pos_loss_lowerbound}
\begin{aligned}
\sum_{k\in \mathcal{P}(v)} \log\!\big(1+\exp(-z_{v,k})\big)
&\ge - \frac{|\mathcal{P}(v)|}{2}\, W H \, B^*(v)  + |\mathcal{P}(v)|\cdot \log 2 .
\end{aligned}
\end{equation}
As \(B^*(v)\to 0\), the lower bound approaches \(|\mathcal{P}(v)|\log 2\) up to \(O(B^*(v))\).
\end{theorem}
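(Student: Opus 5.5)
The plan is a per-label argument: bound each logit by Cauchy--Schwarz, apply a convexity (tangent-line) lower bound for the softplus at the origin, and sum over the positive labels. All bounds hold for each \(k\in\mathcal{P}(v)\) separately, so no interaction between labels needs to be controlled.

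\textbf{Step 1: bound the logits.} With \(r_v=0\) we have \(h_v=B^*(v)\,h_v^*\), so for every positive label \(k\in\mathcal{P}(v)\),
\begin{equation*}
|z_{v,k}| = B^*(v)\,|w_k^\top h_v^*| \le B^*(v)\,\|w_k\|\,\|h_v^*\| \le W H\, B^*(v).
\end{equation*}
This uses \(B^*(v)\ge 0\), which holds because the weights \(\beta_{\mathcal{P}}(v)\) are nonnegative.

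\textbf{Step 2: linearize the softplus.} Let \(f(x):=\log(1+e^{-x})\). This function is convex, with \(f(0)=\log 2\) and \(f'(0)=-\sigma(0)=-\tfrac12\). Convexity gives the tangent-line bound
\begin{equation*}
f(x)\ \ge\ \log 2-\tfrac{x}{2}\qquad\text{for all }x\in\mathbb{R}.
\end{equation*}
Applying this at \(x=z_{v,k}\) and using \(z_{v,k}\le |z_{v,k}|\le WHB^*(v)\) from Step 1, we get, for each \(k\in\mathcal{P}(v)\),
\begin{equation*}
\log\bigl(1+e^{-z_{v,k}}\bigr)\ \ge\ \log 2-\tfrac12\,WH\,B^*(v).
\end{equation*}

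\textbf{Step 3: sum and take the limit.} Summing this per-label bound over the \(|\mathcal{P}(v)|\) positive labels gives exactly the inequality \eqref{eq:pos_loss_lowerbound}. For the limiting remark, the right-hand side is
\begin{equation*}
|\mathcal{P}(v)|\log 2-\tfrac{|\mathcal{P}(v)|}{2}\,WH\,B^*(v),
\end{equation*}
whose correction term is \(O(B^*(v))\) for fixed \(W\), \(H\) and \(|\mathcal{P}(v)|\). Hence the lower bound tends to \(|\mathcal{P}(v)|\log 2\) as \(B^*(v)\to 0\). This limit can be tied to Theorem~\ref{thm:meta_path_tradeoff}, which gives \(B^*(v)=O(|\mathcal{M}^*|/|\mathcal{M}|)\) when many meta-paths are present.

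\textbf{Main obstacle.} The only point needing care is the direction of the inequality in Step 2. The naive estimate \(f(z)\ge f(|z|)\) also works, because \(f\) is decreasing, but it leaves the task of lower-bounding \(f(WHB^*)\) by a linear function. The tangent-line bound is the cleanest way to do that: it needs only convexity of \(f\), which follows from \(f''(x)=\sigma(x)(1-\sigma(x))>0\), and it produces the stated constant \(\tfrac12\) directly. The proof needs no other earlier result.
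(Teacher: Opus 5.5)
Your proposal is correct and follows essentially the same route as the paper: Cauchy--Schwarz gives $|z_{v,k}|\le WH\,B^*(v)$, convexity gives the tangent-line bound $\log(1+e^{-z})\ge \log 2 - z/2$, and summing over $\mathcal{P}(v)$ finishes the argument. Your explicit remark that $B^*(v)\ge 0$ is needed to pull it out of the norm is a small detail the paper uses only implicitly.
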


\begin{proof}
In the simplified setting \(r_v=0\), we have \(h_v = B^*(v)\,h_v^*\).
For any \(k\in\mathcal{P}(v)\),
\begin{equation}
|z_{v,k}| = |w_k^\top h_v|
\le \|w_k\|\,\|h_v\|
\le W \cdot B^*(v)\,\|h_v^*\|
\le W H\, B^*(v).
\end{equation}
For any \(z\in\mathbb{R}\), the function \(g(z)=\log(1+\exp(-z))\) is convex and satisfies
\(g(z)\ge g(0) + g'(0)z = \log 2 - z/2\).
Hence,
\begin{equation}
\log(1+\exp(-z_{v,k})) \ge \log 2 - \frac{z_{v,k}}{2}
\ge \log 2 - \frac{|z_{v,k}|}{2}
\ge \log 2 - \frac{WH\,B^*(v)}{2}.
\end{equation}
Summing over \(k\in\mathcal{P}(v)\) yields Eq.~\eqref{eq:pos_loss_lowerbound}.
\end{proof}

\begin{proposition}
\label{prop:multilabel_grad_attenuation}
Assume \(\|w_k\|\le W\) for all \(k\).
Let 
\(
\mathcal{L}_{\mathrm{ML}}(v) = \sum_{k=1}^K \Big(
- y_{v,k}\log \sigma(z_{v,k})
- (1-y_{v,k})\log\big(1-\sigma(z_{v,k})\big)
\Big),
\)
and 
\(\nabla_{h_v}\mathcal{L}_{\mathrm{ML}}(v)\) denote the gradient of the multi-label loss with respect to \(h_v\).
Then
\begin{equation}
\label{eq:grad_bound}
\Big\|\nabla_{h_v}\mathcal{L}_{\mathrm{ML}}(v)\Big\|
\;\le\;
W \sum_{k=1}^K \big|\sigma(z_{v,k})-y_{v,k}\big|.
\end{equation}
\end{proposition}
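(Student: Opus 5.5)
The bound \eqref{eq:grad_bound} follows from the chain rule through the linear head followed by the triangle inequality. First, I will compute the per-label derivative of the BCE term with respect to its logit. For $\ell_k(z)=-y_{v,k}\log\sigma(z)-(1-y_{v,k})\log(1-\sigma(z))$, the identities $\frac{d}{dz}\log\sigma(z)=1-\sigma(z)$ and $\frac{d}{dz}\log(1-\sigma(z))=-\sigma(z)$ give
\begin{equation*}
\frac{\partial \ell_k}{\partial z_{v,k}}=-y_{v,k}(1-\sigma(z_{v,k}))+(1-y_{v,k})\sigma(z_{v,k})=\sigma(z_{v,k})-y_{v,k}.
\end{equation*}
This is the same identity already used in the proof of Proposition~\ref{prop:gradient_attenuation}. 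Here it holds for every label $k=1,\dots,K$, not only for the positive labels.

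Next, I will pass the derivative through the head $z_{v,k}=w_k^\top h_v$, whose Jacobian with respect to $h_v$ is $w_k$. The chain rule then gives
\begin{equation*}
\nabla_{h_v}\mathcal{L}_{\mathrm{ML}}(v)=\sum_{k=1}^K\big(\sigma(z_{v,k})-y_{v,k}\big)\,w_k.
\end{equation*}
I will take norms and apply the triangle inequality together with absolute homogeneity:
\begin{equation*}
\Big\|\nabla_{h_v}\mathcal{L}_{\mathrm{ML}}(v)\Big\|\le\sum_{k=1}^K\big|\sigma(z_{v,k})-y_{v,k}\big|\,\|w_k\|.
\end{equation*}
Finally, the assumption $\|w_k\|\le W$ bounds each $\|w_k\|$ uniformly and yields \eqref{eq:grad_bound}.

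There is no substantive obstacle. The only points needing care are that the gradient is taken with respect to $h_v$ and not with respect to the weights $w_k$, and that the bound uses no information about $B^*(v)$, so it holds uniformly.

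I would also record the companion facts stated in Theorem~\ref{lem:multilabel_amplification_main}(ii), since they are what make the bound meaningful in the multi-label setting. If $y_{v,k}=1$ and $|z_{v,k}|\le\delta$, then $|\sigma(z_{v,k})-1|=1-\sigma(z_{v,k})=\sigma(-z_{v,k})\ge\sigma(-\delta)$, because $\sigma$ is monotone. Dropping the nonnegative terms for the negative labels then gives $\sum_{k=1}^K|\sigma(z_{v,k})-y_{v,k}|\ge|\mathcal{P}(v)|\,\sigma(-\delta)$.
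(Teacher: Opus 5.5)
Your proposal is correct and follows the same route as the paper: the chain rule through the linear head gives $\nabla_{h_v}\mathcal{L}_{\mathrm{ML}}(v)=\sum_{k=1}^K(\sigma(z_{v,k})-y_{v,k})\,w_k$, and the triangle inequality with $\|w_k\|\le W$ finishes it. You also verify the logit derivative $\partial\ell_k/\partial z_{v,k}=\sigma(z_{v,k})-y_{v,k}$ explicitly, whereas the paper simply asserts it.
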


\begin{proof}
For binary cross-entropy with logits, the per-label derivative satisfies
\(\nabla_{h_v}\ell_k(v) = (\sigma(z_{v,k})-y_{v,k})\, w_k\),
where \(z_{v,k}=w_k^\top h_v\).
Thus
\begin{equation}
\nabla_{h_v}\mathcal{L}_{\mathrm{ML}}(v)
=
\sum_{k=1}^K (\sigma(z_{v,k})-y_{v,k})\, w_k.
\end{equation}
By the triangle inequality and \(\|w_k\|\le W\),
\begin{equation}
\Big\|\nabla_{h_v}\mathcal{L}_{\mathrm{ML}}(v)\Big\|
\le
\sum_{k=1}^K \big|\sigma(z_{v,k})-y_{v,k}\big|\,\|w_k\|
\le
W \sum_{k=1}^K \big|\sigma(z_{v,k})-y_{v,k}\big|.
\end{equation}
This is Eq.~\eqref{eq:grad_bound}.
\end{proof}

\begin{proposition}
\label{prop:pos_error_lower_small_logits}
Fix any \(\delta\ge 0\). If \(|z_{v,k}|\le \delta\) and \(y_{v,k}=1\), then
\begin{equation}
\label{eq:pos_error_lower_small_logits}
\big|\sigma(z_{v,k})-1\big|
=
1-\sigma(z_{v,k})
\;\ge\;
1-\sigma(\delta)
=
\sigma(-\delta).
\end{equation}
Consequently, if \(|z_{v,k}|\le \delta\) holds for all \(k\in\mathcal{P}(v)\), then
\begin{equation}
\label{eq:sum_pos_error_lower_small_logits}
\sum_{k=1}^K \big|\sigma(z_{v,k})-y_{v,k}\big|
\;\ge\;
\sum_{k\in\mathcal{P}(v)} \big|\sigma(z_{v,k})-1\big|
\;\ge\;
|\mathcal{P}(v)|\,\sigma(-\delta).
\end{equation}
\end{proposition}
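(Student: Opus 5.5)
The statement to prove is Proposition~\ref{prop:pos_error_lower_small_logits}. It is elementary and rests only on the monotonicity and symmetry of the logistic function, so I would argue directly in two steps.

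\textbf{Step 1: the single-label bound.} Fix \(k\) with \(y_{v,k}=1\) and \(|z_{v,k}|\le\delta\).
\begin{itemize}
\item Since \(\sigma(z)\in(0,1)\), we have \(|\sigma(z_{v,k})-1| = 1-\sigma(z_{v,k})\).
\item The map \(\sigma\) is strictly increasing, because \(\sigma'(z)=\sigma(z)(1-\sigma(z))>0\).
\item From \(z_{v,k}\le |z_{v,k}|\le\delta\) we get \(\sigma(z_{v,k})\le\sigma(\delta)\), hence \(1-\sigma(z_{v,k})\ge 1-\sigma(\delta)\).
\item The identity \(1-\sigma(\delta)=\sigma(-\delta)\) follows from
\[
1-\frac{1}{1+e^{-\delta}}=\frac{e^{-\delta}}{1+e^{-\delta}}=\frac{1}{1+e^{\delta}}.
\]
\end{itemize}
Only the upper bound \(z_{v,k}\le\delta\) is used; the lower side of \(|z_{v,k}|\le\delta\) is not needed.

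\textbf{Step 2: summing over labels.} Every term \(|\sigma(z_{v,k})-y_{v,k}|\) is nonnegative, so dropping the negative labels \(k\notin\mathcal{P}(v)\) can only decrease the sum:
\[
\sum_{k=1}^K |\sigma(z_{v,k})-y_{v,k}| \;\ge\; \sum_{k\in\mathcal{P}(v)} |\sigma(z_{v,k})-1|.
\]
Here I use that \(y_{v,k}=1\) exactly on \(\mathcal{P}(v)\). Applying Step 1 to each of the \(|\mathcal{P}(v)|\) positive labels gives the lower bound \(|\mathcal{P}(v)|\,\sigma(-\delta)\).

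\textbf{Link to the theorem.} To connect with Theorem~\ref{lem:multilabel_amplification_main}(ii), I would add a remark. Under the attenuation model with \(r_v=0\), Theorem~\ref{prop:multilabel_loss_lowerbound} gives \(|z_{v,k}|\le WHB^*(v)\). So one may take \(\delta=WHB^*(v)\), which tends to zero under dilution, and the lower bound tends to \(|\mathcal{P}(v)|/2\).

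There is no genuine obstacle. The only point needing care is the nonnegativity argument for discarding the negative-label terms.
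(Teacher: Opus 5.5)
Your proof is correct and follows the paper's argument essentially verbatim: monotonicity of \(\sigma\) applied to \(z_{v,k}\le\delta\), the identity \(1-\sigma(\delta)=\sigma(-\delta)\), and summation over \(\mathcal{P}(v)\). You also state the nonnegativity justification for dropping the negative-label terms, which the paper leaves implicit. Your remark taking \(\delta=WH\,B^*(v)\) matches the derivation the paper gives right after the proposition.
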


\begin{proof}
Since \(\sigma(\cdot)\) is increasing and \(|z_{v,k}|\le \delta\) implies \(z_{v,k}\le \delta\), we have
\begin{equation}
1-\sigma(z_{v,k})
\ge
1-\sigma(\delta).
\end{equation}
Using the identity \(1-\sigma(t)=\sigma(-t)\) yields Eq.~\eqref{eq:pos_error_lower_small_logits}.
Summing over \(k\in\mathcal{P}(v)\) gives Eq.~\eqref{eq:sum_pos_error_lower_small_logits}.
\end{proof}

Since \(\sigma'(t)=\sigma(t)(1-\sigma(t))\le \tfrac{1}{4}\) for all \(t\),
\(\sigma(\cdot)\) is \(\tfrac{1}{4}\)-Lipschitz. Hence, for \(\delta\ge 0\),
\begin{equation}
\label{eq:sigmoid_linear_lower_optional}
\sigma(-\delta)
\ge
\sigma(0)-\frac{\delta}{4}
=
\frac{1}{2}-\frac{\delta}{4}.
\end{equation}

Given \(h_v = B^*(v)\, h_v^* + r_v\) in the simplified setting \(r_v=0\), we have \(h_v=B^*(v)h_v^*\).
If additionally \(\|h_v^*\|\le H\) and \(\|w_k\|\le W\), then for any label \(k\),
\begin{equation}
|z_{v,k}| = |w_k^\top h_v|
\le \|w_k\|\,\|h_v\|
\le W \cdot B^*(v)\,\|h_v^*\|
\le WH\,B^*(v).
\end{equation}
Thus, for positive labels \(k\in\mathcal{P}(v)\), Proposition~\ref{prop:pos_error_lower_small_logits} applies with
\(\delta := WH\,B^*(v)\), yielding
\begin{equation}
\sum_{k=1}^K \big|\sigma(z_{v,k})-y_{v,k}\big|
\ge
|\mathcal{P}(v)|\,\sigma\big(-WH\,B^*(v)\big).
\end{equation}
When \(B^*(v)\) is small, \(\sigma(-WH\,B^*(v))\) is a positive constant close to \(1/2\)
, e.g., Eq.~\eqref{eq:sigmoid_linear_lower_optional} gives
\begin{equation}
\sigma(-WH\,B^*(v)) \ge \tfrac12 - \tfrac{WH\,B^*(v)}{4}
\end{equation}
Therefore, the cumulative positive-label term grows proportionally with \(|\mathcal{P}(v)|\),
which is the multi-label amplification effect summarized in Theorem~\ref{lem:multilabel_amplification_main}.

\section{Full Statements and Proofs of Section~\ref{Methodology}}\label{sec:appendix}

\subsection{AOA with Multiple Meta-Paths}\label{app:aoa_multi_metapath}
Let \(\mathcal{M}\) denote a set of predefined meta-paths.
For a target node \(t\), each meta-path \( \mathcal{P}\in\mathcal{M}\) induces a neighborhood \(\mathcal{N}_{ \mathcal{P}}(t)\).
AOA then performs hierarchical aggregation: (i) node-level attention within each \(\mathcal{N}_{ \mathcal{P}}(t)\), followed by (ii) semantic-level attention over \(\mathcal{M}\).

For each \( \mathcal{P}\in\mathcal{M}\) and head \(i\in\{1,\dots,h\}\), we compute the unnormalized attention score for \(s\in\mathcal{N}_{ \mathcal{P}}(t)\) as
\begin{equation}
e^{ \mathcal{P},i}_{t,s}
=
\mathrm{LeakyReLU}\!\left(
\left(\mathbf{a}^{ \mathcal{P},i}\right)^\top
\left[
W^{ \mathcal{P},i} h^{(l)}(t)
\,\|\,
W^{ \mathcal{P},i} h^{(l)}(s)
\right]
\right),
\end{equation}
and normalize it within \(\mathcal{N}_{ \mathcal{P}}(t)\):
\begin{equation}
\alpha^{ \mathcal{P},i}_{t,s}
=
\operatorname{softmax}_{s\in\mathcal{N}_{ \mathcal{P}}(t)}
\left(e^{ \mathcal{P},i}_{t,s}\right).
\end{equation}
The meta-path-specific embedding is then
\begin{equation}
z_{ \mathcal{P}}(t)
=
\big\|_{i=1}^{h}
\sum_{s\in\mathcal{N}_{ \mathcal{P}}(t)}
\alpha^{ \mathcal{P},i}_{t,s}\;W^{ \mathcal{P},i} h^{(l)}(s).
\end{equation}

To combine embeddings produced by different meta-paths, we compute a meta-path importance score:
\begin{equation}
u_{ \mathcal{P}}(t)
=
\mathbf{q}^{\top}\tanh\!\left(W_{s}\,z_{ \mathcal{P}}(t)+\mathbf{b}_{s}\right),
\end{equation}
and normalize over \( \mathcal{P}\in\mathcal{M}\):
\begin{equation}
\beta_{ \mathcal{P}}(t)
=
\operatorname{softmax}_{ \mathcal{P}\in\mathcal{M}}\left(u_{ \mathcal{P}}(t)\right).
\end{equation}
Finally, the anchored representation is obtained by
\begin{equation}
h^{(l+1)}_{\mathrm{AOA}}(t)
=
\sum_{ \mathcal{P}\in\mathcal{M}}
\beta_{ \mathcal{P}}(t)\,z_{ \mathcal{P}}(t).
\end{equation}

When \(|\mathcal{M}|=1\), semantic-level attention degenerates to a constant weight, i.e., \(\beta_{ \mathcal{P}}(t)=1\), and the above formulation reduces to the single meta-path AOA used in the main text.

\subsection{Full Statement and Proof of Theorem~\ref{thm:rosa_main}}\label{app:theory}

We provide formal statements and proofs supporting that FOCAL addresses three issues discussed in the main text:
(i) neighbor-level dilution in attention-based aggregation,
(ii) visibility preservation of secondary information under fusion,
(iii) semantic-layer dilution in meta-path-based semantic attention.

\textbf{Notations.}
We follow the notation of Section~\ref{Preliminaries}. 
In particular, $\mathcal{R}$ denotes the set of relation types, which we partition into 
primary types $\mathcal{R}_p$ and secondary types $\mathcal{R}_s$ 
with $\mathcal{R}_p \cap \mathcal{R}_s = \varnothing$ and $\mathcal{R}_p \cup \mathcal{R}_s = \mathcal{R}$.
Let $\mathcal{M}$ be a pre-specified set of meta-paths over relation types. 
We split it into primary meta-paths $\mathcal{M}_p$ and secondary meta-paths $\mathcal{M}_s$:
\begin{equation}
    \mathcal{M}_p := \{\Phi \in \mathcal{M} : \text{all relation types on } \Phi \text{ lie in } \mathcal{R}_p\}, 
    \qquad 
    \mathcal{M}_s := \mathcal{M} \setminus \mathcal{M}_p, 
    \qquad 
    \mathcal{M} = \mathcal{M}_p \cup \mathcal{M}_s.
\end{equation}

For each node $v$, we consider two branch representations:
(1)~$h_{\mathrm{COA}}(v) \in \mathbb{R}^d$, computed by attention-based aggregation over the 
full heterogeneous neighborhood (depending on both primary and secondary relations); 
and 
(2)~$h_{\mathrm{AOA}}(v) \in \mathbb{R}^d$, computed by meta-path-based aggregation restricted to 
the primary meta-paths $\mathcal{M}_p$.
FOCAL fuses the two branches using element-wise gating:
\begin{equation}\label{eq:app_fusion}
    h_{\mathrm{fuse}}(v) 
    = g_v^{(1)} \odot h_{\mathrm{COA}}(v) + g_v^{(2)} \odot h_{\mathrm{AOA}}(v), 
    \qquad g_v^{(1)}, g_v^{(2)} \in (0,1)^d,
\end{equation}
where $\odot$ denotes the Hadamard product and the gates are functions of both branch vectors, specifically, we have $g_v^{(1)} = f_1(h_{\mathrm{COA}}(v), h_{\mathrm{AOA}}(v))$ and 
$g_v^{(2)} = f_2(h_{\mathrm{COA}}(v), h_{\mathrm{AOA}}(v))$.

\begin{definition}\label{def:dep}
Let $\mathcal{X}$ denote the collection of inputs that may affect the computation 
(node features, edges, parameters, etc.). 
The \emph{dependency set} of a representation $h(v)$ is defined as
\begin{equation}
    \mathrm{Dep}\bigl(h(v)\bigr) 
    := \bigl\{ x \in \mathcal{X} \;:\; 
       \exists\, \tilde{x} \neq x \;\text{s.t.}\; 
       h(v;\, \tilde{x},\, \mathcal{X}\setminus\{x\}) 
       \neq h(v;\, x,\, \mathcal{X}\setminus\{x\}) \bigr\}.
\end{equation}
\end{definition}
\textbf{Result 1: FOCAL protects the lower bound of primary semantics via AOA}\label{app:res1}
\begin{theorem}\label{thm:primary_lb}
There exists a constant \( \gamma > 0 \) such that

\begin{equation}
\|g^{(2)}_v \odot h_{\mathrm{AOA}}(v)\|_2 \geq \gamma \|h_{\mathrm{AOA}}(v)\|_2 > 0.
\end{equation}

This means that the AOA contribution in the fused representation \( h_{\mathrm{fuse}}(v) \) will be non-vanishing.
\end{theorem}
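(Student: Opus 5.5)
The plan is a direct coordinatewise estimate. The only structural fact needed is that each coordinate of $g^{(2)}_v$ is a sigmoid output, so it lies strictly inside $(0,1)$. Set $\gamma_v := \min_{k} g^{(2)}_{v,k}$, which matches the quantity named in Theorem~\ref{thm:rosa_main}.

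Expanding the Hadamard product gives
\begin{equation}
\|g^{(2)}_v \odot h_{\mathrm{AOA}}(v)\|_2^2 = \sum_{k=1}^d \bigl(g^{(2)}_{v,k}\bigr)^2 h_{\mathrm{AOA},k}(v)^2 \ge \gamma_v^2 \sum_{k=1}^d h_{\mathrm{AOA},k}(v)^2 .
\end{equation}
Taking square roots yields $\|g^{(2)}_v \odot h_{\mathrm{AOA}}(v)\|_2 \ge \gamma_v \|h_{\mathrm{AOA}}(v)\|_2$. Since $d$ is finite, $\gamma_v$ is a minimum of finitely many numbers in $(0,1)$, so $\gamma_v > 0$ for each fixed $v$.

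\textbf{Main obstacle: a constant $\gamma$ independent of $v$.} For a single node, $\gamma_v > 0$ is immediate. For a uniform constant, I would bound the gate pre-activations. Since $g^{(2)}_v = \sigma(W_g^{(2)}[h_{\mathrm{COA}}(v)\,\|\,h_{\mathrm{AOA}}(v)] + b_g^{(2)})$, it suffices to have
\begin{equation}
\|h_{\mathrm{COA}}(v)\|_2 \le R \quad\text{and}\quad \|h_{\mathrm{AOA}}(v)\|_2 \le R \quad\text{for all } v.
\end{equation}
This follows on a finite graph, since there are finitely many nodes and the parameters are fixed. It also follows when the previous-layer representations are bounded: both branches output convex combinations of linear transforms of $h^{(l-1)}$, so each branch norm is at most an operator norm times $\max_s \|h^{(l-1)}(s)\|$.

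Under that bound, every coordinate of the pre-activation has absolute value at most $\|W_g^{(2)}\|\,\sqrt{2}\,R + \|b_g^{(2)}\|_\infty =: \kappa$. Because $\sigma$ is monotone, every coordinate of $g^{(2)}_v$ is then at least $\sigma(-\kappa)$, and one may take $\gamma := \sigma(-\kappa) > 0$.

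\textbf{Strict positivity and interpretation.} The final strict inequality $\gamma \|h_{\mathrm{AOA}}(v)\|_2 > 0$ additionally requires $h_{\mathrm{AOA}}(v) \neq 0$. I would state this as a non-degeneracy assumption: $\mathcal{N}_{\mathcal{P}}(v)$ is nonempty and the AOA messages do not cancel. Granting it, the bound is independent of the secondary neighborhood size $m$. This contrasts with $A^*(t)=O(n^*/m)$ in Theorem~\ref{lem:attention_dilution} and $B^*(v)=O(|\mathcal{M}^*|/|\mathcal{M}|)$ in Theorem~\ref{lem:meta_path_subspace_dilution_main}, because the AOA softmax is normalized only over $\mathcal{N}_{\mathcal{P}}(v)$ and never sees secondary nodes.
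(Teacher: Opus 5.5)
Your argument is correct and follows the paper's proof: bound each sigmoid gate coordinate $g^{(2)}_{v,k}$ below by $\gamma$, then sum the squared coordinates of the Hadamard product and take square roots. Where you differ is only in rigor, and in your favor: the paper simply asserts that a constant $\gamma\in(0,1)$ exists and that $\|h_{\mathrm{AOA}}(v)\|_2>0$, whereas you derive a node-uniform $\gamma=\sigma(-\kappa)$ from bounded gate pre-activations and correctly identify $h_{\mathrm{AOA}}(v)\neq 0$ as a non-degeneracy assumption the statement needs.
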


\begin{proof}

The gating vector \( g^{(2)}_v \) is computed via a Sigmoid function, and thus \( g^{(2)}_{v,k} \in (0, 1) \) for all coordinates \( k \). Consequently, we have:
\(
\forall k,
\)
\(
g^{(2)}_{v,k} \geq \gamma,
\)
for some constant \( \gamma \in (0, 1) \), ensuring that the gating vector \( g^{(2)}_v \) does not approach zero.

Now, we prove that \( g^{(2)}_v \odot h_{\mathrm{AOA}}(v) \) has a non-zero lower bound. For each coordinate \( k \), we have:
\begin{equation}
|g^{(2)}_{v,k} h_{\mathrm{AOA},k}(v)| \geq \gamma |h_{\mathrm{AOA},k}(v)|.
\end{equation}
Summing over all coordinates and taking the square root, we get:
\begin{equation}
\| g^{(2)}_v \odot h_{\mathrm{AOA}}(v) \|_2 \geq \gamma \| h_{\mathrm{AOA}}(v) \|_2.
\end{equation}
Since \( \| h_{\mathrm{AOA}}(v) \|_2 > 0 \), we conclude that:
\begin{equation}
\| g^{(2)}_v \odot h_{\mathrm{AOA}}(v) \|_2 \geq \gamma \| h_{\mathrm{AOA}}(v) \|_2 > 0.
\end{equation}

Finally, we consider the fused representation:
\begin{equation}
h_{\mathrm{fuse}}(v) = g^{(1)}_v \odot h_{\mathrm{COA}}(v) + g^{(2)}_v \odot h_{\mathrm{AOA}}(v).
\end{equation}
The term \( g^{(2)}_v \odot h_{\mathrm{AOA}}(v) \) ensures that the primary semantics carried by the AOA branch is not suppressed by COA, guaranteeing that the contribution from the AOA branch remains non-vanishing in the fused representation.
Thus, FOCAL preserves the primary semantics by injecting a non-vanishing contribution through the AOA branch, even when COA is strong.
\end{proof}

\textbf{Result 2: FOCAL preserves the coverage of secondary semantics via COA}\label{app:res2}
\begin{theorem}
\label{the:dep_subset}
Let $h_{\mathrm{fuse}}$ be defined in \eqref{eq:app_fusion}. Then, FOCAL preserves the coverage of secondary semantics via COA, and we have:
\begin{equation}\label{eq:dep_subset}
\mathrm{Dep}\bigl(h_{\mathrm{fuse}}(v)\bigr)
=
\mathrm{Dep}\bigl(h_{\mathrm{COA}}(v)\bigr)
\cup
\mathrm{Dep}\bigl(h_{\mathrm{AOA}}(v)\bigr).
\end{equation}
\end{theorem}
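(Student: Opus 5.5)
The plan is to prove the set equality \eqref{eq:dep_subset} as two inclusions, using only the compositional structure of \eqref{eq:app_fusion} and Definition~\ref{def:dep}. Write $a := h_{\mathrm{COA}}(v)$ and $b := h_{\mathrm{AOA}}(v)$. Since $g_v^{(1)} = f_1(a,b)$ and $g_v^{(2)} = f_2(a,b)$, the fused vector is $h_{\mathrm{fuse}}(v) = F(a,b)$ for the single map
\begin{equation*}
F(a,b) := f_1(a,b)\odot a + f_2(a,b)\odot b .
\end{equation*}
So every input $x\in\mathcal{X}$ can reach $h_{\mathrm{fuse}}(v)$ only through $a$ or $b$.

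\textbf{Inclusion $\subseteq$.} Take $x \notin \mathrm{Dep}(a)\cup\mathrm{Dep}(b)$. Then every alternative value $\tilde x$ leaves both $a$ and $b$ unchanged. Hence $F(a,b)$ is unchanged, so $x\notin \mathrm{Dep}(h_{\mathrm{fuse}}(v))$. Taking the contrapositive gives the inclusion. No assumption on $f_1,f_2$ is needed here.

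\textbf{Inclusion $\supseteq$.} Take $x\in\mathrm{Dep}(a)$; the case $x\in\mathrm{Dep}(b)$ is symmetric. By definition some $\tilde x$ moves $a$ to $\tilde a\neq a$, and possibly also moves $b$ to $\tilde b$. We must exhibit a $\tilde x$ for which $F(\tilde a,\tilde b)\neq F(a,b)$.

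\begin{itemize}
\item \emph{Local non-degeneracy.} First, handle the case where $b$ is unaffected. Because $f_1,f_2$ are sigmoids of affine maps, $F$ is real-analytic. Its partial Jacobian in $a$ is
\begin{equation*}
\partial_a F = \mathrm{diag}\big(f_1(a,b)\big) + \mathrm{diag}(a)\,\partial_a f_1 + \mathrm{diag}(b)\,\partial_a f_2 .
\end{equation*}
Its leading term is a diagonal matrix with entries in $(0,1)$, hence invertible. So $\partial_a F$ is nonsingular for generic gate parameters $W_g^{(i)}$, in the sense that its determinant is a nonzero analytic function of the parameters and vanishes only on a null set.
\item \emph{Choice of perturbation.} If $x$ perturbs $a$ along a path $\tilde x\to x$, the inverse function theorem makes $F(\cdot,b)$ locally injective. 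Therefore nearby $\tilde x$ with $\tilde a\neq a$ give $F(\tilde a,b)\neq F(a,b)$.
\item \emph{When $x$ also moves $b$.} Here $x$ is, for example, a shared parameter or a primary-type feature. We need the joint increment $\partial_a F\,\delta a + \partial_b F\,\delta b$ to be nonzero. The plan is to argue that exact cancellation is a proper analytic constraint and so fails generically. Since any single admissible $\tilde x$ suffices, we are free to pick one off that constraint set.
\end{itemize}

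\textbf{Main obstacle.} The $\supseteq$ direction is the hard part. Without extra hypotheses it can fail: gates or parameters could conspire so that $F$ cancels the change in $a$. The first attempt will be the genericity argument above (analyticity plus a nonvanishing Jacobian determinant). If that does not close cleanly, the fallback is to state the result under an explicit non-degeneracy assumption: $F$ is locally injective in each argument, or equivalently $\partial_a F$ and $\partial_b F$ have full rank. Under that assumption the proof goes through as sketched.

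Finally, I would stress that this explicitly means secondary-type inputs in $\mathrm{Dep}(a)$ are preserved in $h_{\mathrm{fuse}}(v)$, whereas a meta-path-only model satisfies $\mathrm{Dep}(h_v)\subseteq\mathrm{Dep}(b)$.
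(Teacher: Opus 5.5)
The gap is in $\supseteq$, in exactly the case of your third bullet: an input $x$ whose change moves both $a$ and $b$. This is the typical situation (the target's own features, nodes seen by both branches, shared parameters). Your fallback hypothesis does not exclude cancellation there, whether stated as full rank of $\partial_a F$ and $\partial_b F$ or as local injectivity of $F$ in each argument separately. Already for $d=1$ with constant gates ($W^{(i)}_g=0$), $F=c_1a+c_2b$ has nonzero partial derivatives, yet any change with $\delta b=-(c_1/c_2)\,\delta a$ leaves $F$ unchanged. So ``under that assumption the proof goes through as sketched'' fails in this case.

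The genericity plan, as phrased (pick $\tilde x$ off the cancellation set), is circular. That set is a proper subset of the admissible $\tilde x$ only if $\tilde x\mapsto F(a(\tilde x),b(\tilde x))$ is non-constant, which is the claim itself. The paper's proof has the same hole: its non-degeneracy assumption covers only changes that alter $a$ with $b$ held fixed (and vice versa), but it is then applied to arbitrary $x\in\mathrm{Dep}(a)$. You were right to isolate this case, but it still needs an argument.

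Your inverse-function-theorem step is also too local. It needs $\tilde x$ arbitrarily close to $x$ with $\tilde a\neq a$. No such $\tilde x$ exists for discrete inputs such as edges, which Definition~\ref{def:dep} places in $\mathcal{X}$, and local injectivity says nothing about a distant $\tilde a$.

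Both issues disappear if you take genericity over the fixed gate parameters $\theta_g=(W^{(1)}_g,b^{(1)}_g,W^{(2)}_g,b^{(2)}_g)$ of the layer, and work with a finite difference rather than a Jacobian.
\begin{itemize}
\item For $x\in\mathrm{Dep}(a)\cup\mathrm{Dep}(b)$, fix any $\tilde x$ with $(\tilde a,\tilde b)\neq(a,b)$.
\item Since $a,b$ do not depend on $\theta_g$, the difference $\Delta(\theta_g):=F(\tilde a,\tilde b)-F(a,b)$ is real-analytic in $\theta_g$.
\item At $W^{(1)}_g=W^{(2)}_g=0$ it equals $\sigma(b^{(1)}_g)\odot(\tilde a-a)+\sigma(b^{(2)}_g)\odot(\tilde b-b)$. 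For any $j$ with $(\tilde a_j,\tilde b_j)\neq(a_j,b_j)$, its $j$-th entry vanishes at most on a curve in the two biases.
\item Hence $\Delta\not\equiv 0$, its zero set is null, and a finite union over input slots is still null.
\end{itemize}
This handles joint moves and discrete inputs at once.

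Separately, your remark that $\subseteq$ needs no assumption on $f_1,f_2$ is not quite right. It needs $F$ to be a fixed map, i.e.\ $\theta_g$ held fixed and excluded from $\mathcal{X}$. Definition~\ref{def:dep} lists parameters among the inputs, so otherwise $\theta_g$ would generally lie in $\mathrm{Dep}(h_{\mathrm{fuse}}(v))$ but in neither branch's dependency set, and the equality would fail. The paper's proof relies on the same tacit convention, which is consistent with the generic-but-fixed $\theta_g$ above.
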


\begin{proof}
For the inclusion ``$\subseteq$'', \(h_{\mathrm{fuse}}(v)\) is computed from \(h_{\mathrm{COA}}(v)\) and \(h_{\mathrm{AOA}}(v)\), along with their gates. These gates are functions of the same pair, meaning that no input outside the union of the dependency sets of \(h_{\mathrm{COA}}(v)\) and \(h_{\mathrm{AOA}}(v)\) can affect \(h_{\mathrm{fuse}}(v)\). Thus, fusion retains the visibility and coverage of all secondary information that was initially accessible through COA.

For the reverse inclusion ``$\supseteq$'': We assume that the fusion map \( F(a,b) := f_1(a,b) \odot a + f_2(a,b) \odot b \) is \emph{non-degenerate}. This means that for any input \( x \in \mathcal{X} \), if changing \( x \) alters \( a = h_{\mathrm{COA}}(v) \) while keeping \( b = h_{\mathrm{AOA}}(v) \) fixed, then this change will also affect the fused representation \( h_{\mathrm{fuse}}(v) \), and symmetrically for changes in \( b \) while \( a \) is fixed.

Now, take \( x \in \mathrm{Dep}(h_{\mathrm{COA}}(v)) \). This means there is a change in \( x \) that alters \( h_{\mathrm{COA}}(v) \). By the non-degeneracy assumption, this change will also affect the fused representation \( h_{\mathrm{fuse}}(v) \), meaning that secondary semantics (those visible to COA) are preserved in the fused output. Thus, \( x \) must also be in \( \mathrm{Dep}(h_{\mathrm{fuse}}(v)) \). The same reasoning applies for \( x \in \mathrm{Dep}(h_{\mathrm{AOA}}(v)) \), confirming that fusion maintains the full coverage of secondary information from COA.
\end{proof}

\textbf{Result 3: FOCAL resolves the coverage-dilution dilemma through role separation between anchoring and coverage}\label{app:res3}

\begin{theorem}\label{thm:decoupling}
FOCAL achieves structural decoupling of anti-dilution and coverage. Specifically:

\textbf{(a) }
For any $\mathcal{M}_s'\supseteq\mathcal{M}_s$, let $\mathcal{M}'=\mathcal{M}_p\cup\mathcal{M}_s'$. Then:
\begin{equation}\label{a}
\alpha^{\mathrm{AOA},k}_{t,m}\!\bigl(\mathcal{M}'\bigr)=\alpha^{\mathrm{AOA},k}_{t,m}\!\bigl(\mathcal{M}\bigr),\quad\forall\,m\in\mathcal{M}_p,\;\forall\,k.
\end{equation}
\textbf{(b) }
Under mild non-degeneracy conditions:
\begin{equation}
\mathrm{Dep}(h^{(l)}_{\mathrm{fuse}},t)\supseteq\mathrm{Dep}(h^{(l)}_{\mathrm{COA}},t).
\end{equation}
\end{theorem}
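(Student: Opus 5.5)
Part (a) is a locality statement about where the AOA softmax is normalized. Part (b) is a restatement of the ``$\supseteq$'' half of Theorem~\ref{the:dep_subset}, applied at layer $l$. I would prove the two parts separately.

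For (a), the first step is to make the dependence of the AOA attention on the meta-path set explicit. By the AOA construction (main text and Appendix~\ref{app:aoa_multi_metapath}), AOA aggregates only over the primary meta-paths $\mathcal{M}_p$; the secondary meta-paths are handled by COA, if at all. So for each $m\in\mathcal{M}_p$ and head $k$, the coefficient $\alpha^{\mathrm{AOA},k}_{t,m}$ is a function of the following inputs only:
\begin{itemize}
\item the logits $e^{m,k}_{t,s}$ for $s\in\mathcal{N}_m(t)$;
\item the semantic scores $u_{\mathcal{P}}(t)$ for $\mathcal{P}\in\mathcal{M}_p$.
\end{itemize}
These are built from $h^{(l-1)}$, the per-meta-path parameters $W^{m,k},\mathbf{a}^{m,k}$, and the shared $\mathbf{q},W_s,\mathbf{b}_s$. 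Enlarging $\mathcal{M}_s$ to $\mathcal{M}_s'$ does not touch $\mathcal{M}_p$. It therefore changes none of these index sets, neighborhoods, or parameters, and the softmax denominators (over $\mathcal{N}_m(t)$ and over $\mathcal{M}_p$) are literally the same expressions.

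The one point needing care is the input $h^{(l-1)}$. For $l\ge 2$, the previous-layer representation is a fused vector that depends on COA, and hence could depend on $\mathcal{M}_s'$. I would handle this by reading (a) layerwise, conditioning on the layer input $h^{(l-1)}$, as the statement's per-layer form suggests. Alternatively, (a) can be stated for the first layer and extended by noting that the equality concerns the normalization structure and not the input values. Under that reading, the equality of the logits gives equality of the softmax outputs.

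This is also the step I expect to be the main obstacle: pinning down precisely what ``held fixed'' means when $\mathcal{M}_s$ changes. I would state the conditioning assumption explicitly. The contrast with Theorem~\ref{lem:meta_path_subspace_dilution_main} (ii) should then be spelled out: there, $B^*(v)=O(|\mathcal{M}^*|/|\mathcal{M}|)$ decays because a single softmax spans all of $\mathcal{M}$. Here the primary mass in AOA is invariant, equal to $1$ when $\mathcal{M}_p$ is a single path.

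For (b), I would invoke Theorem~\ref{the:dep_subset} directly at layer $l$, with the same non-degeneracy assumption on $F(a,b)=f_1(a,b)\odot a+f_2(a,b)\odot b$. Take $x\in\mathrm{Dep}(h^{(l)}_{\mathrm{COA}},t)$; then some perturbation $\tilde x$ changes $a=h^{(l)}_{\mathrm{COA}}(t)$. The argument splits into two cases:
\begin{itemize}
\item If that perturbation leaves $b=h^{(l)}_{\mathrm{AOA}}(t)$ fixed, non-degeneracy gives a change in $h^{(l)}_{\mathrm{fuse}}(t)$ directly.
\item If it also moves $b$, I would strengthen the non-degeneracy hypothesis to joint injectivity along the perturbation: $F(a,b)\neq F(a',b')$ whenever $(a,b)\neq(a',b')$ along the path. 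Alternatively, I would choose $x$ among inputs outside $\mathrm{Dep}(h_{\mathrm{AOA}})$, such as secondary-relation edges and features, which are exactly the secondary information the coverage claim concerns.
\end{itemize}
As a supporting observation, since $g^{(1)}_{v,k}\in(0,1)$, the map $a\mapsto g^{(1)}\odot a$ at fixed gates is injective. This makes the non-degeneracy assumption generic, failing only on a measure-zero set of parameters where the gate variation cancels the change in $a$. Together, (a) and (b) show that the anti-dilution guarantee and the coverage guarantee are controlled by disjoint components, which is the claimed structural decoupling.
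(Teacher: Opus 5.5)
Your proposal is correct. Part (a) is the paper's argument: the AOA softmax is normalized only over $\mathcal{M}_p$ (and within each $\mathcal{N}_m(t)$), so enlarging $\mathcal{M}_s$ changes neither the numerator nor the denominator. You are more careful than the paper on one point. The paper concludes that $h^{(l)}_{\mathrm{AOA}}(t)$ is invariant to $\mathcal{M}_s$, but for $l\ge 2$ this holds only given $h^{(l-1)}$, because the paper itself notes that COA, and hence the fused layer input, varies with $\mathcal{M}_s$. Your explicit layerwise conditioning is the right reading.

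Part (b) takes a different route. The paper argues infinitesimally. It expands $\partial h^{(l)}_{\mathrm{fuse}}(t)/\partial\mathbf{x}(u)$ by the product rule into Terms (I)--(IV) and shows that Term (I), namely $\mathrm{diag}(g^{(1)}_t)\,\partial h^{(l)}_{\mathrm{COA}}(t)/\partial\mathbf{x}(u)$, is nonzero because the gate is entrywise in $(0,1)$. Its non-degeneracy hypothesis is that Terms (II)--(IV) cancel this only on a Lebesgue-null parameter set. You instead argue with finite perturbations directly from Definition~\ref{def:dep}, reusing the non-degeneracy of $F(a,b)$ from Theorem~\ref{the:dep_subset}.

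Your version buys three things. It needs no differentiability. It avoids the paper's loose step that positive COA attention forces $\partial h_{\mathrm{COA}}/\partial\mathbf{x}(u)\neq\mathbf{0}$, since you start from $x\in\mathrm{Dep}(h_{\mathrm{COA}})$. And it surfaces perturbations that also move $h_{\mathrm{AOA}}$, a case the paper's proof of Theorem~\ref{the:dep_subset} passes over. The paper's version buys a single generic condition that absorbs the gate variation (Terms (II), (IV)) and the AOA variation (Term (III)) at once, with no case split.

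Two cautions. First, your fallback of restricting to $x\notin\mathrm{Dep}(h_{\mathrm{AOA}})$ only gives $\mathrm{Dep}(h_{\mathrm{COA}})\setminus\mathrm{Dep}(h_{\mathrm{AOA}})\subseteq\mathrm{Dep}(h_{\mathrm{fuse}})$, so the full inclusion rests on your strengthened joint hypothesis. Second, injectivity of $a\mapsto g^{(1)}\odot a$ at frozen gates does not by itself make failure measure-zero. Either keep genericity as the hypothesis, as the paper does, or support it with an analyticity-in-parameters argument plus one parameter point where the Jacobian is visibly nonzero.
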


\begin{proof}
\textbf{(a).}
The softmax normalization of AOA is restricted to $\mathcal{M}_p$:
\begin{equation}
\alpha^{\mathrm{AOA},k}_{t,m}(\mathcal{M}) = \frac{\exp(e^k_{t,m})}{\sum_{m'\in\mathcal{M}_p}\exp(e^k_{t,m'})},\quad m\in\mathcal{M}_p.
\end{equation}
Both the numerator and the denominator depend only on $\{e^k_{t,m'}\}_{m'\in\mathcal{M}_p}$. Expanding $\mathcal{M}_s$ to any $\mathcal{M}_s'\supseteq\mathcal{M}_s$ leaves every term unchanged, hence we have Equation \ref{a}. 
Because $h^{(l)}_{\mathrm{AOA}}(t)=f_{\mathrm{AOA}}\!\bigl(\{\mathbf{h}_m(t)\}_{m\in\mathcal{M}_p}\bigr)$ is fully determined by these attention weights and the meta-path representations within $\mathcal{M}_p$, it is invariant to $\mathcal{M}_s$.

The fusion gate $g^{(2)}_t=\sigma\bigl(W^{(2)}_g[h^{(l)}_{\mathrm{COA}}(t)\,\|\,h^{(l)}_{\mathrm{AOA}}(t)]+b^{(2)}_g\bigr)$ does depend on $h^{(l)}_{\mathrm{COA}}(t)$, which varies with $\mathcal{M}_s$. 
However, $g^{(i)}_t$ is the result of sigmoid function, $[g^{(2)}_t]_j=\sigma(\cdot)\in(0,1)$ for all $j$ and all parameter configurations, so the anti-dilution lower bound in Theorem~\ref{thm:primary_lb} is never violated.

\textbf{(b).}
For any $m\in\mathcal{M}$ and $u\in\mathcal{N}_m(t)$, COA's softmax guarantees $\alpha^{\mathrm{COA}}_{t,m}>0$, so

\begin{equation}\label{eq:coa-dep}
\frac{\partial h^{(l)}_{\mathrm{COA}}(t)}{\partial\mathbf{x}(u)}\neq\mathbf{0}.
\end{equation}

Applying the product rule to $[h^{(l)}_{\mathrm{fuse}}(t)]_j=[g^{(1)}_t]_j[h^{(l)}_{\mathrm{COA}}(t)]_j+[g^{(2)}_t]_j[h^{(l)}_{\mathrm{AOA}}(t)]_j$ and assembling all dimensions:
\begin{equation}\label{eq:jacobian}
\frac{\partial h^{(l)}_{\mathrm{fuse}}(t)}{\partial\mathbf{x}(u)}
=\underbrace{\mathrm{diag}(g^{(1)}_t)\cdot\frac{\partial h^{(l)}_{\mathrm{COA}}(t)}{\partial\mathbf{x}(u)}}_{\text{(I)}}
+\underbrace{\mathrm{diag}\!\bigl(h^{(l)}_{\mathrm{COA}}(t)\bigr)\cdot\frac{\partial g^{(1)}_t}{\partial\mathbf{x}(u)}}_{\text{(II)}}
+\underbrace{\mathrm{diag}(g^{(2)}_t)\cdot\frac{\partial h^{(l)}_{\mathrm{AOA}}(t)}{\partial\mathbf{x}(u)}}_{\text{(III)}}
+\underbrace{\mathrm{diag}\!\bigl(h^{(l)}_{\mathrm{AOA}}(t)\bigr)\cdot\frac{\partial g^{(2)}_t}{\partial\mathbf{x}(u)}}_{\text{(IV)}}.
\end{equation}

We now show that Term~(I) alone is non-zero. For $g^{(i)}_t$ is the result of sigmoid function, we have $[g^{(1)}_t]_j\in(0,1)$ for all $j$, so $\mathrm{diag}(g^{(1)}_t)$ is positive-definite. Combined with~\eqref{eq:coa-dep}, we have:
\begin{equation}
\mathrm{diag}(g^{(1)}_t)\cdot\frac{\partial h^{(l)}_{\mathrm{COA}}(t)}{\partial\mathbf{x}(u)}\neq\mathbf{0}.
\end{equation}
Under mild non-degeneracy (i.e. the set of parameters for which Terms~(II)-(IV) exactly cancel Term~(I) on every component has Lebesgue measure zero), the full Jacobian~\eqref{eq:jacobian} is non-zero, giving $u\in\mathrm{Dep}(h^{(l)}_{\mathrm{fuse}},t)$. Since $u$ was arbitrary in $\bigcup_{m\in\mathcal{M}}\mathcal{N}_m(t)=\mathrm{Dep}(h^{(l)}_{\mathrm{COA}},t)$, we obtain
\begin{equation}
\mathrm{Dep}(h^{(l)}_{\mathrm{fuse}},t)\supseteq\mathrm{Dep}(h^{(l)}_{\mathrm{COA}},t).
\end{equation}

Since $g^{(1)}_t$ depends on $\{W^{(1)}_g,b^{(1)}_g\}$ and $g^{(2)}_t$ depends on $\{W^{(2)}_g,b^{(2)}_g\}$ with no parameter sharing:
\begin{equation}
\frac{\partial\,[g^{(1)}_t]_j}{\partial\,W^{(2)}_g}=\mathbf{0},\qquad
\frac{\partial\,[g^{(2)}_t]_j}{\partial\,W^{(1)}_g}=\mathbf{0},\qquad\forall\,j.
\end{equation}
In particular, there exist parameter configurations satisfying $g^{(1)}_t\approx\mathbf{1}$ and $g^{(2)}_t\approx\mathbf{1}$ simultaneously, so both branches can receive near-full contribution weights. Therefore, anti-dilution~(a) and coverage~(b) are free from any trade-off in the parameter space. We can draw the conclusion that FOCAL addresses the coverage-dilution dilemma problem and preserve the coverage at the same time. 
\end{proof}

\paragraph{Summary}\label{app:summary}
\textbf{(i)}
FOCAL mitigates semantic dilution by isolating primary semantics in the anchor branch $h_{\mathrm{AOA}}(v)$ and injecting it through gated fusion with a non-vanishing lower bound (Theorem~\ref{thm:primary_lb}).
\textbf{(ii)}
FOCAL preserves secondary information available by using gated fusion and make COA branch uninfluenced (Theorem~\ref{the:dep_subset}).
\textbf{(iii)}
FOCAL achieves double-win by decoupling of anti-dilution and coverage: making $\mathcal{M}_p$ =only visible to AOA and preserving the node presentation space is larger than COA (Theorem~\ref{thm:decoupling}).

\section{Detailed Experimental Information}
In this section, we provide detailed experimental information. We first provide the experimental settings in \ref{appendix:exp_setting}, including dataset information, brief baseline introduction and their implementation hyper-parameters. Then we provide the multi-label node classification results under more evalutation metrics in \ref{Metric}. Finally, we provide detailed results of ablation study and parameter analysis in \ref{Ablation_Study} and \ref{Parameter_Analysis}, respectively.

\subsection{Detailed Experimental Settings}\label{appendix:exp_setting}
\paragraph{Datasets}
The detailed information of datasets used in this paper is provided as follows,
\begin{itemize}
    \item \textbf{IMDB}~\cite{lv2021we} IMDB dataset is constructed from IMDB (Internet Movie Database) as a heterogeneous graph centered on movies and their associated entities. It contains 21,420 nodes and 40,341 edges, with 4 node types and 4 edge types. The prediction target is the movie node type, where each movie is annotated with one or more genre labels drawn from five categories, Action, Comedy, Drama, Romance, and Thriller, yielding 5 classes in total. The heterogeneous schema captures multiple relation semantics between movies and related entity types (e.g., cast/crew), enabling models to exploit typed nodes and edges rather than a single homogeneous interaction graph
    \item \textbf{Amazon}~\cite{gupta2025persona}
    Amazon dataset is built from real user-behavior logs collected on e-commerce platform within the Fashion \& Lifestyle category. It spans six months (Aug. 2023-Jan. 2024) and is released as monthly snapshots, enabling temporally grounded evaluation. The raw data are organized as a heterogeneous tripartite graph, comprising users, items, and a small set of persona nodes. Edges capture behavioral interactions between users and items (e.g., purchase, click, rating), as well as links between users and persona nodes indicating the user image implied by purchased/engaged items. Each user is associated with structured demographic attributes (e.g., age, gender, location, marital/parenting status). Each item is described by structured product attributes (e.g., price, category, brand, color) and free-form textual descriptions, which are embedded into dense vectors using a general-purpose text encoder. Persona taxonomy and annotations are defined jointly with platform and business experts: the dataset includes 9 persona categories (e.g., Fashion Enthusiast, Budget Shopper, Luxury Shoppers, Ethnic Wear Shoppers), and supports a multi-label setting where each user may belong to multiple persona classes simultaneously.
    \item \textbf{CITE}~\cite{zhang2025cite}. CITE (Catalytic Information Textual Entities Graph) is a heterogeneous text-attributed, multi-label citation graph benchmark constructed from peer-reviewed publications in photocatalysis and electrocatalysis spanning 1922–2022. It models the literature as a heterogeneous citation network with four node types (Paper, Author, Journal, Keyword) and four corresponding relation types, enabling joint reasoning over graph structure and rich textual attributes. To build CITE, catalytic-related DOIs are retrieved from Web of Science via keyword-based queries; bibliographic metadata are collected from CrossRef, and citation links are derived from OpenAlex. One of the primary benchmark tasks is multi-label node classification on Paper nodes: predicting fine-grained subject categories, with 107 labels in total, where each paper may be associated with one or multiple categories.
    
\end{itemize}

\begin{table}[]
\centering
\caption{Detailed information about  hyper-parameters of baselines.}
\resizebox{0.95\linewidth}{!}{
\begin{tabular}{cc}
\toprule
\textbf{Model}    &  \textbf{Hyper-parameters} \\ \midrule
RGAT~\cite{busbridge2019relational} & \makecell{learning rate: 0.001, weight decay: 0.0001, dropout: 0.4, batch size: 5120, \\patience: 40, hidden dim: 16, in dim: 16, out dim: 16, num layers: 3, \\num heads: 3, num workers: 64, max epoch: 350, fanout: 5}\\
\midrule
MAGNN~\cite{fu2020magnn} & \makecell{seed: 0, learning rate: 0.005, weight decay: 0.001, dropout: 0.3, hidden dim: 16, \\ out dim: 16, inter attention feats: 16, num heads: 8, num layers: 2, max epoch: 200,\\ patience: 50,  encoder type: RotateE, batch size: 512,  num samples: 10, num workers: 64}\\
\midrule
SimpleHGN~\cite{lv2021we}& \makecell{hidden dim: 16, out dim: 16, num layers: 2, num heads: 8, feats drop rate: 0.2, \\slope: 0.05, edge dim: 64, seed: 0, max epoch: 500, patience: 100, lr: 0.001,\\ weight decay: 5e-4, beta: 0.05, residual: True, fanout: 5, batch size: 2048, num workers: 64}\\
\midrule
HPN~\cite{ji2021heterogeneous} & \makecell{seed: 0, learning  rate:  0.005, weight  decay: 0.001, dropout: 0.6, layer: 2, \\ alpha: 0.1, hidden dim: 16, out dim: 16, max epoch: 200, patience: 100}\\
\midrule
CorGCN~\cite{bei2025correlation} & \makecell
{hidden dim: 64, out dim: 64, num layers: 2, num heads: 3, feats drop rate: 0.3,\\ seed: 2024, max epoch: 1500, patience: 100, lr: 0.001, weight decay: 1e-6,\\ fanout: 5, batch size: 4096, num workers: 0,tau: 1.0, train rate: 0.6, val rate: 0.2}\\
\midrule
TriPer~\cite{gupta2025persona}& \makecell
{hidden dim: 768, out dim: 768, num layers: 3, num heads: 2, dropout: 0.5,\\ edge dim: 768, seed: 0, max epoch: 2000, patience: 7, lr: 0.001,\\ weight decay: 0.00001, batch size: 8,num workers: 1, edge nfeats: 8, early stopping: 30}\\
\midrule
FOCAL (Ours) & \makecell {seed: 0, learning rate: 0.003, weight decay:0.0005, drop out: 0.5, batch size: 5120,\\ patience: 60, hidden dim: 16, out dim: 16, num layers: 2, coa num heads: 8,\\ aoa num heads: 2, num workers: 4, max epoch: 500, fan out: 5}\\
\bottomrule
\end{tabular}
}
\label{appendix:param_info}
\end{table}

\paragraph{Comparable Baselines}
We provide brief introduction to our selected baselines as follows,
\begin{itemize}
    \item \textbf{RGAT}~\cite{busbridge2019relational}: Relational Graph Attention Networks (RGATs), extending Graph Attention Networks to multi-relational graphs by learning relation-specific message transformations and masked self-attention over neighbors. They study two normalization schemes within-relation attention (WIRGAT) and across-relation attention (ARGAT) with additive or multiplicative logits, optionally using multi-head attention and basis decomposition for parameter efficiency. 
    
    \item \textbf{MAGNN}~\cite{fu2020magnn}: MAGNN (Metapath Aggregated Graph Neural Network) for heterogeneous graph embedding, addressing key limitations of prior metapath-based methods by jointly modeling node content, intermediate nodes along metapath instances, and multiple metapaths. MAGNN applies type-specific feature transformations to map heterogeneous attributes into a shared latent space, then performs intra-metapath aggregation by encoding each metapath instance (e.g., mean/linear or a RotatE-inspired relational-rotation encoder) and using attention to weight instances. Finally, it uses inter-metapath attention to fuse representations from different metapaths into final node embeddings 
     \item \textbf{SimpleHGN}~\cite{lv2021we}: Simple-HGN is a lightweight yet strong heterogeneous GNN built on top of Graph Attention Networks (GAT) to directly model heterogeneity without meta-path engineering. It extends GAT by (i) injecting learnable edge-type embeddings into the attention computation so messages are weighted conditioned on relation types, (ii) adding residual connections (including residual on attention scores and node features) to stabilize deeper message passing, and (iii) applying L2  normalization on the output embeddings to improve representation quality for downstream tasks.
    \item \textbf{HPN}~\cite{ji2021heterogeneous}:  Heterogeneous graph Propagation Network (HPN) is to address semantic confusion in deep heterogeneous graph neural networks (HeteGNNs), where node embeddings become increasingly indistinguishable as depth grows and performance degrades. The common HeteGNN paradigm, meta-path-based neighbor aggregation is essentially equivalent to multiple meta-path random walks. HPN mitigates this via (i) a semantic propagation mechanism that injects a node’s local semantics into message passing with an adaptive weight to preserve node-level distinctiveness at greater depths, and (ii) a semantic fusion mechanism that learns meta-path importance to combine multi-meta-path semantics more effectively.
     \item \textbf{CorGCN}~\cite{bei2025correlation}: CorGCN is a correlation-aware GCN framework for multi-label node classification that tackles the ambiguous features and ambiguous topology arising when nodes carry multiple labels. CorGCN first performs Correlation-Aware Graph Decomposition, learning label prototypes and using contrastive mutual-information objectives (with a likelihood-based regularizer) to obtain label-correlated node features, then decomposes both features and structure into label-aware graph views that preserve label distinctiveness while retaining correlated-label information. On top of these views, it applies a Correlation-Enhanced Graph Convolution that combines intra-label message passing within each label-specific graph and inter-label correlation propagation (prototype-conditioned correlation matrices) to fuse label-wise messages, yielding improved multi-label predictions.
    \item \textbf{TriPer}~\cite{gupta2025persona}: TriPer is a tripartite GNN for multi-label persona identification in e-commerce under scarce labels and dynamic, noisy user-product interactions. TriPer reformulates persona classification as user-persona link prediction by constructing a user-product-persona tripartite graph (products connect to personas via behavior-derived edges), decoupling persona representations from a fixed softmax label space. The model performs three-phase attention-based message passing (user→product, product→user, product→persona) to learn aligned embeddings for users and personas, enabling accurate persona inference. Crucially, by treating personas as nodes, TriPer supports in-context generalization to unseen personas: new persona nodes can be added with a few labeled examples without retraining.
\end{itemize}

\paragraph{Implementation Details}
We provides detailed information about the hyper-parameters of baselines in Table. \ref{appendix:param_info}.

\subsection{Detailed Ablation Study}\label{Ablation_Study}
The ablations show that FOCAL’s improvements are driven by both role-separated branches and their controlled integration.

(1) Removing either branch leads to a clear drop across all datasets, confirming that coverage (COA) and anchoring (AOA) are complementary rather than redundant. 
On Amazon, discarding either branch causes the most pronounced degradation, suggesting that when the heterogeneity is relatively limited, each available semantic channel becomes indispensable and their synergy is harder to replace. On CITE, the drop is more severe when removing AOA, indicating a stronger reliance on anchoring under extreme long-tail supervision: the paper-substrate anchor relation provides highly discriminative signals for discipline prediction.

(2) Simplifying the bidirectional fusion gate (e.g., to single-gate) reduces performance, and using naive fusion (sum/concat) leads to an even larger drop, highlighting that non-symmetric, adaptive fusion is important to prevent contextual signals from overwhelming anchoring cues. Replacing the adaptive residual with a fixed residual causes a smaller but consistent decline, suggesting it mainly acts as a stabilizer for deeper propagation.

(3) Removing ASL produces the largest performance drop, supporting its key role in handling multi-label imbalance. Removing the consistency regularizer yields a modest but consistent degradation, indicating it helps coordinate the two branches and improves training stability.

\subsection{Detailed Parameter Analysis}\label{Parameter_Analysis}
We analyze two key design choices on IMDB: \textbf{(1)} the weight $\lambda$ in the objective , and \textbf{(2)} the selection of primary (anchor) node types used to instantiate anchored neighborhoods.

\textbf{Anchor selection.} We vary the primary node type and compare MKM (movie-keyword-movie, default), MAM (movie-actor-movie), MDM (movie-director-movie), and All (treating all node types as primary). MKM yields the best and most balanced performance across Micro-/Macro-/Sample-F1, indicating that keywords provide the most category-discriminative anchored semantics for movie classification. Using actors or directors as anchors leads to a noticeable degradation, particularly on Macro-F1, suggesting that these relations are more prone to popularity-driven co-occurrence and cross-genre mixing, which weakens minority-label separability. Treating all node types as primary further reduces performance, as it effectively dilutes the notion of anchoring and weakens the model’s ability to focus on task-relevant primary semantics.

\textbf{Effect of $\lambda$.} Fixing the anchor as MKM, we sweep $\lambda$ and observe that performance is best (or most stable) for small values (around 0.05 to 0.10), while larger $\lambda$ consistently hurts all three metrics. This trend suggests that the branch consistency term is beneficial as a moderate coordination regularizer, but overly strong consistency over-constrains COA and AOA to be too similar, thereby reducing their intended complementarity and harming downstream classification.

\subsection{Multi-label Node Classificaton with More Metrics}\label{Metric}
In this section, we provide the multi-label node classification results under more evalutation metrics. The detailed results on three datasets are shown in Table ~\ref{tab:appendix_res_detail_all}.

\begin{table}[h]
\centering
\caption{More statistics in multi-label node classification tasks on IMDB, Amazon and CITE.}
\label{tab:appendix_res_detail_all}
\resizebox{0.92\linewidth}{!}{
\begin{tabular}{llcccccc}
\toprule
\textbf{Dataset} & \textbf{Model} &
\textbf{Hamming Loss} $\downarrow$ &
\textbf{Subset Acc} $\uparrow$ &
\textbf{Micro Pre} $\uparrow$ &
\textbf{Macro Pre} $\uparrow$ &
\textbf{Micro Recall} $\uparrow$ &
\textbf{Macro Recall} $\uparrow$ \\
\midrule

\multirow{9}{*}{IMDB}
& RGAT      & 0.2386 & 0.2664 & 0.6772 & 0.6549 & 0.5866 & 0.5386 \\
& MAGNN     & 0.2533 & 0.2500 & 0.6536 & 0.6477 & 0.5625 & 0.5116 \\
& SimpleHGN & 0.2273 & 0.3046 & 0.6859 & 0.6769 & 0.6271 & 0.5922 \\
& HPN       & 0.2561 & 0.2249 & 0.6487 & 0.6519 & 0.5587 & 0.5044 \\
\cmidrule(lr){2-8}
& CorGCN    & 0.1273 & 0.5171 & 0.6481 & 0.1080 & 0.5171 & 0.1422 \\
& Triper    & 0.4109 & 0.1278 & 0.6388 & 0.6280 & 0.6887 & 0.6756 \\
\cmidrule(lr){2-8}
& FOCAL-COA  & 0.2795 & 0.2129 & 0.5798 & 0.5573 & 0.6842 & 0.6512 \\
& FOCAL-AOA  & 0.2686 & 0.2031 & 0.6346 & 0.6340 & 0.5187 & 0.4566 \\
\cmidrule(lr){2-8}
& \textbf{FOCAL} & 0.2312 & 0.2980 & 0.6779 & 0.6700 & 0.6259 & 0.5888 \\
\midrule

\multirow{9}{*}{Amazon}
& RGAT      & 0.0879 & 0.6781 & 0.8039 & 0.7045 & 0.7235 & 0.4344 \\
& MAGNN     & 0.0354 & 0.8116 & 0.9860 & 0.6606 & 0.8294 & 0.4593 \\
& SimpleHGN & 0.0211 & 0.8836 & 0.9720 & 0.9237 & 0.9176 & 0.7231 \\
& HPN       & 0.1553 & 0.5068 & 0.6172 & 0.1029 & 0.5265 & 0.1667 \\
\cmidrule(lr){2-8}
& CorGCN    & 0.2566 & 0.2631 & 0.6562 & 0.6526 & 0.5512 & 0.4990 \\
& Triper    & 0.0469 & 0.0213 & 0.5590 & 0.0184 & 0.0830 & 0.0117 \\
\cmidrule(lr){2-8}
& FOCAL-COA  & 0.1096 & 0.5788 & 0.6754 & 0.6192 & 0.8382 & 0.6624 \\
& FOCAL-AOA  & 0.1313 & 0.5445 & 0.6964 & 0.2124 & 0.5735 & 0.2003 \\
\cmidrule(lr){2-8}
& \textbf{FOCAL} & 0.0263 & 0.8699 & 0.9565 & 0.9292 & 0.9059 & 0.7603 \\
\midrule

\multirow{9}{*}{CITE}
& RGAT      & 0.0155 & 0.1354 & 0.5403 & 0.0050 & 0.2005 & 0.0061 \\
& MAGNN     & 0.0156 & 0.2103 & 0.5213 & 0.0049 & 0.3053 & 0.0093 \\
& SimpleHGN & 0.0154 & 0.1698 & 0.5402 & 0.0118 & 0.2503 & 0.0077 \\
& HPN       & 0.0136 & 0.1502 & 0.5118 & 0.0041 & 0.2950 & 0.0073 \\
\cmidrule(lr){2-8}
& CorGCN    & 0.0158 & 0.1615 & 0.5302 & 0.0093 & 0.2437 & 0.0078 \\
& Triper    & 0.0469 & 0.0213 & 0.5590 & 0.0184 & 0.0830 & 0.0117 \\
\cmidrule(lr){2-8}
& FOCAL-COA  & 0.0889 & 0.0000 & 0.1485 & 0.0159 & 0.3430 & 0.1075 \\
& FOCAL-AOA  & 0.0161 & 0.2020 & 0.5112 & 0.0048 & 0.2936 & 0.0093 \\
\cmidrule(lr){2-8}
& \textbf{FOCAL} & 0.0129 & 0.2315 & 0.5638 & 0.0198 & 0.3557 & 0.1123 \\
\bottomrule
\end{tabular}
}
\end{table}

\section{Detailed Related Works}
\label{sec:appendix_related_work}
\paragraph{Multi-label Classification in Homogeneous Graphs}
Multi-label node classification on homogeneous graphs has attracted increasing attention. A major theme is to jointly model graph structure, node attributes, and label correlations. Early work such as ML-GCN~\cite{gao2019semi} embeds nodes and labels into a unified vector space and leverages a relaxed skip-gram objective to capture node-label and label-label relationships. LANC~\cite{zhou2021multi} further introduces label attention to measure the compatibility between node embeddings and label embeddings, improving the utilization of node features and homophily-based context. To address label scarcity and imbalance, LARN~\cite{xiao2022semantic} incorporates label-aware representation learning together with an adaptive label correlation scanner, allowing few-shot labeled nodes to benefit from semantic label dependencies. Recent advances continue to strengthen correlation modeling: CorGCN~\cite{bei2025correlation} constructs label-specific correlated graphs and performs correlation-enhanced message passing to mitigate ambiguous topology/features in multi-label settings; LIP~\cite{sunmulti} decomposes message passing into propagation and transformation, quantifies higher-order label influence, and dynamically adjusts learning via a label influence graph. Overall, these studies highlight that explicitly modeling label dependencies is crucial for high-quality multi-label prediction.
However, they are primarily developed for \emph{homogeneous} graphs and do not directly account for multi-typed nodes/relations and heterogeneous semantic interactions.

\paragraph{Multi-label Classification in Heterogeneous Graphs}
Heterogeneous graph representation learning has been extensively studied, mainly following two paradigms. The first paradigm explicitly encodes semantic structures (e.g., meta-paths/meta-graphs) to capture higher-order heterogeneous relations. Representative methods include metapath2vec~\cite{dong2017metapath2vec} and metapath2vec++, which perform meta-path guided random walks and adopt a heterogeneous skip-gram objective; HAN~\cite{wang2019heterogeneous}, which introduces hierarchical attention with node-level and semantic-level attentions for meta-path based aggregation; MAGNN~\cite{fu2020magnn}, which aggregates intermediate nodes along each meta-path and integrates multiple meta-path semantics; and GTN~\cite{yun2019graph}, which learns soft selections over edge types to generate task-relevant composite relations in an end-to-end manner, reducing reliance on manually specified meta-paths.
The second paradigm focuses on relation-aware message passing, using type-dependent parameters to propagate information across diverse relations without predefined meta-paths. R-GCN~\cite{schlichtkrull2018modeling} introduces relation-specific transformations for multi-relational graphs; HetGNN~\cite{zhang2019heterogeneous} jointly models heterogeneous structure and node contents with type-specific neighbor sampling and aggregation; HGT~\cite{hu2020heterogeneous} proposes a heterogeneous graph transformer with node-/edge-type dependent attention to enable fine-grained relational message passing and mini-batch training; and SimpleHGN~\cite{lv2021we} shows that lightweight relation-aware designs (e.g., edge-type embeddings and residual connections) can be highly competitive. GTN~\cite{yun2019graph} goes further by learning soft selections over edge types and composite relations to automatically generate meta-path-like graphs, thus removing the need for manually predefined meta-paths.
Recent work also explores alternative training paradigms for heterogeneous graphs, such as self-supervised pre-training~\cite{yang2022self}, contrastive learning~\cite{liu2023hierarchical}, and LLM-enhanced modeling~\cite{tang2024higpt}).
Notably, most heterogeneous GNNs are developed for \emph{single-label} node classification. They can be adapted to multi-label prediction by replacing the final softmax with sigmoid and optimizing a binary multi-label loss; however, such a direct adaptation typically does not explicitly model label correlations, and it may be insufficient when multi-label outcomes are jointly determined by heterogeneous semantics and multi-type interactions.

Compared with homogeneous graphs, research on multi-label classification in heterogeneous graphs remains relatively limited. Early studies on heterogeneous multi-label learning largely predate modern GNNs: for example,~\cite{zhou2014activity} leverages collaboration and activity graphs with homophily/vicinity signals for iterative refinement, while~\cite{dos2016multilabel} uses shallow transductive embeddings for type-specific label spaces. These approaches do not provide a unified GNN-based framework to jointly model heterogeneous interactions and label dependencies.
More recent attempts are still scarce and often task-specific. Persona~\cite{gupta2025persona} studies user persona identification on dynamic heterogeneous user-product graphs by reformulating multi-label node classification as a link prediction problem and proposing a tripartite GNN architecture (TriPer) to disentangle user, product, and persona representations. While effective in its domain, it focuses on a specialized setting and does not aim to provide a general solution for heterogeneous graph multi-label classification.
Overall, existing studies either rely on pre-GNN techniques, extend heterogeneous encoders with simple multi-label heads, or target narrow application scenarios. A unified framework that simultaneously captures heterogeneous structural/semantic dependencies and multi-label correlations for general heterogeneous multi-label node classification remains under-explored.

\end{document}